\tikzstyle{VertexStyle} = [shape = ellipse,
\tikzstyle{EdgeStyle}   = [->,>=stealth']
\newtheorem{thm}{Theorem}[section]
\newcommand{\norm}[1]{\left\lVert#1\right\rVert}
\newtheorem{definition}{Definition}[section]
\newtheorem{lemma}[thm]{Lemma}
\newtheorem{theorem}[thm]{Theorem}
\newtheorem{assumption}{Assumption}[section]
\newtheorem{corollary}[thm]{Corollary}
\newcommand{\abs}[1]{\left\vert#1\right\vert}
\def\eqref#1{equation~\ref{#1}}
\def\1{\bm{1}}
\DeclareMathAlphabet{\mathsfit}{\encodingdefault}{\sfdefault}{m}{sl}
\SetMathAlphabet{\mathsfit}{bold}{\encodingdefault}{\sfdefault}{bx}{n}
\title{Statistical Theory of Differentially Private Marginal-based Data Synthesis Algorithms}
\author{Ximing Li
 \thanks{Tsinghua University. Email:
\texttt{li-xm19@mails.tsinghua.edu.cn}.} \qquad \qquad
 Chendi Wang
\thanks{ Shenzhen Research Institute of Big Data \&
 Wharton Statistics and Data Science Department,
 University of Pennsylvania. Email:
 \texttt{chendi@wharton.upenn.edu}. }
\qquad \qquad
 Guang Cheng
 \thanks{
 Department of Statistics,
 University of California, Los Angeles.  Email:
\texttt{guangcheng@ucla.edu}.}
}
\begin{document}

\maketitle

\begin{abstract}
 Marginal-based methods achieve promising performance in the synthetic data competition hosted by the National Institute of Standards and Technology (NIST).
 To deal with high-dimensional data, the distribution of synthetic data is represented by a probabilistic graphical model (e.g., a Bayesian network), while the raw data distribution is approximated by a collection of low-dimensional marginals.
 Differential privacy (DP) is guaranteed by introducing random noise to each low-dimensional marginal distribution.
 Despite its promising performance in practice, the statistical properties of marginal-based methods are rarely studied in the literature.
 In this paper, we study DP data synthesis algorithms based on Bayesian networks (BN) from a statistical perspective. We establish a rigorous accuracy guarantee for BN-based algorithms, where the errors are measured by the total variation (TV) distance or the $L^2$ distance.
 Related to downstream machine learning tasks, an upper bound for the utility error of the DP synthetic data is also derived. To complete the picture, we establish a lower bound for TV accuracy that holds for every $\epsilon$-DP synthetic data generator.
\end{abstract}

\section{Introduction}

In recent years, the problem of privacy-preserving data analysis has become increasingly important and \textit{differential privacy} \citep{dwork2006calibrating} appears as the foundation of data privacy.
Differential privacy (DP) techniques are widely adopted by industrial companies and the U.S.\ Census Bureau \citep{uber,rappor,samsung,CensusBureau2020,Census2018Abowd}.

One important method to protect data privacy is differentially private data synthesis (DPDS).
In the setting of DPDS, a synthetic dataset is generated by some DP data synthesis algorithms from a real dataset.
Then, one can release the synthetic dataset and the real dataset will be protected.
Recently, National Institutes of Standards and Technology (NIST) organized the differential privacy synthetic data competition \citep{url:nistunlinkable,url:nist2019,url:nist2020}.
In the NIST competition, the state-of-the-art algorithms are marginal-based \citep{mckenna2021winning}, where the synthetic dataset is drawn from a noisy marginal distribution estimated by the real dataset.
To deal with high-dimensional data, the distribution is usually modeled by the probabilistic graphical model (PGM) such as the Bayesian networks or Markov random fields \citep{jordan1999learning,wainwright2008graphical,zhang2017privbayes,mckenna2019graphical,cai2021privmrf}.

Despite its empirical success in releasing high-dimensional data, as far as we know, the theoretical guarantee of marginal-based DPDS approaches is rarely studied in literature.
In this paper, we focus on a DPDS algorithm based on the Bayesian networks (BN) known as the PrivBayes \citep{zhang2017privbayes} that is widely used in synthesizing sparse data (sparsity measured by the degree of a BN that will be defined later).
A BN is a directed acyclic graph where each vertex is a low-dimensional marginal distribution and each edge is the conditional distribution between two vertices.
It approximates the high-dimensional distribution of the raw data with a set of well-chosen low-dimensional distributions.
Random noise is added to each low-dimensional marginal to achieve differential privacy.
We aim to analyze the marginal-based approach from a statistical perspective and measure the accuracy of PrivBayes under different statistical distances including the total variation distance or the $L^2$ distance.

Another metric of synthetic data we are interested in is the utility metric related to downstream machine learning tasks.
Empirical evaluation of synthetic data in downstream machine learning tasks is widely studied in literature.
Existing utility metrics include Train on Synthetic data and Test on Real data (TSTR, \citep{esteban2017real}) and Synthetic Ranking Agreement  (SRA, \citep{jordon2018measuring}).
To our best knowledge, most of these utility evaluation methods are empirical without a theoretical guarantee.
Establishing the statistical learning theory of synthetic data is another concern of this paper. Precisely, we focus on the statistical theory of PrivBayes based on the TSTR error.

\textbf{Our contributions.}
Our contributions are three-fold. First, we theoretically analyze the marginal-based synthetic data generation and derive an upper bound on the TV distance and $L^2$ distance between real data and synthetic data.
The upper bounds show that the Bayesian network structure mitigates the ``curse of dimensionality".
An upper bound for the sparsity of real data is also derived from the accuracy bounds.
Second, we evaluate the utility of the synthetic data from downstream supervised learning tasks theoretically.
Precisely, we bound the TSTR error between the predictors trained on real data and synthetic data.
Third, we establish a lower bound for the TV distance between the synthetic data distribution and the real data distribution.

\subsection{Related Works and Comparisons}
Broadly speaking, our work is related to a vast body of work in differential privacy \citep{dinur2003revealing,dwork2004privacy,blum2005practical,dwork2007price,nissim2007smooth,barak2007privacy,mcsherry2007mechanism,machanavajjhala2008privacy,dwork2015efficient}. For example, \cite{mcsherry2007mechanism} proposed the exponential mechanism that is widely used in practice. \cite{machanavajjhala2008privacy} discussed privacy for histogram data by sampling from the perturbed cell probabilities. However, these methods are not efficient for releasing high-dimensional tabular data, since the domain size grows exponentially in the dimension (which is known as ``the curse of dimensionality").
The state-of-art method for this problem is the marginal-based approach \citep{zhang2017privbayes,qardaji2014priview,zhang2021privsyn}. \cite{zhang2017privbayes} approximated the raw dataset by a sparse Bayesian network and then added noise to each vertex in the graph. \cite{zhang2021privsyn} selected a collection of 2-way marginals and a gradually updating method was applied to release synthetic data. Although most of them provide rigorous privacy guarantees, theoretical analysis on accuracy is rare.  \cite{wasserman2010stat} established a statistical framework of DP and derived the accuracy of distribution estimated by noisy histograms. Our setting is different from theirs. Precisely, we analyze how noise addition and post-processing affect the conditional distribution (Lemma \ref{conditional-lemma}). Moreover, our proof handles the non-trivial interaction between the Bayesian network and noise addition.

Our lower bound (Theorem \ref{tv-lower}) is related to existing results of the worst case lower bounds under the DP constraint in literature \citep{hardt2010geometry,ullman2013answering,bassily2014private,steinke2017tight}. \cite{hardt2010geometry}  established lower bounds for the accuracy of answering linear queries with privacy budget $\epsilon$. \cite{ullman2013answering} derived the worst-case result that in general, it is NP-hard to release private synthetic data which accurately preserves all two-dimensional marginals. \cite{bassily2014private} built on their result and further developed lower bounds for the excess risk for every $(\epsilon,\delta)$-DP algorithm. Our result is novel since we consider private synthetic data and the corresponding TV accuracy. Existing results for linear quires are not directly applicable to TV accuracy since they heavily rely on the linear structure.

\section{Differential Privacy}
Differential privacy requires that any particular element in the raw dataset has a limited influence on the output \citep{dwork2006calibrating}. The definition is formalized as follows. Here the data domain is denoted as $\Omega$.

\begin{definition}[$(\epsilon,\delta)$-differential privacy]
Let $\mathcal{A}: \Omega^n \rightarrow \mathcal{R}$ be a randomized algorithm that takes a dataset of size $n$ as input, where the output space $\mathcal{R}$ is a probability space. For every $\epsilon,\delta\geq 0$, $\mathcal{A}$ satisfies $(\epsilon,\delta)$-differential privacy if for every two adjacent datasets $D_1$ and $D_2$, we have
\[
 \mathds{P}[\mathcal{A}(D_1)\in S]\leq \exp(\epsilon) \mathds{P}[\mathcal{A}(D_2)\in S]+\delta, \qquad \hbox{ for all measurable } S\subseteq \mathcal{R}.
\]
Here $D_1$ and $D_2$ are datasets of size $n$. We say that they are adjacent if they differ only on a single element, denoted as $D_1\simeq D_2$.
\end{definition}
For $\delta=0$, we abbreviate the definition as $\epsilon$-differential privacy ($\epsilon$-DP). A widely used meta-mechanism to ensure $\epsilon$-DP is the Laplace mechanism. The Laplace mechanism privatizes a function $f$ on the dataset $D$ by adding i.i.d.\ Laplace noises (denoted as $\eta\sim $ $\mathrm{Lap}(\lambda)$ ) to each output value of $f(D)$. Here the probability density function of $\eta$ is given by
$
\mathds{P}[\eta=x]=\frac{1}{2\lambda}\exp(\frac{-\abs{x}}{\lambda}).
$
  \cite{dwork2004privacy} show that it ensures $\epsilon$-DP when $\lambda\geq \Delta_f/\epsilon$, where $\Delta_f$ is the $L^{1}$ sensitivity of $f$:
\[
\Delta_f = \max_{(D_1,D_2): D_1\simeq D_2} \norm{f(D_1)-f(D_2)}_{1}.
\]

\section{Marginal-based Data Synthesis Algorithms}
In this section, we introduce DP marginal-based methods. For simplicity, we consider Boolean data where $\Omega = \{0,1\}^d$ and $|\Omega| = 2^d$.
It's obvious that our theory can be generalized to any categorical dataset with a finite domain size.

\subsection{Differentially Private Estimate of Low-dimensional Marginal Distributions}
Given a dataset $D = \{x^{(i)}\}_{i=1}^n\subset \Omega^n$ drawn independently from a distribution, the probability mass function is estimated by
\begin{align}\label{eq:DefnMarginal}
    p_{D}(x) = \frac{1}{n}\sum_{i=1}^n\mathds{1}[x^{(i)} = x], \qquad \hbox{ for all } x\in\Omega.
\end{align}

\textbf{Noise addition and post processing.}
We then sanitize $p_{D}(x)$ by the Laplace mechanism.
Note that the sensitivity of $p_{D}(x)$ is $1/n$.
Then, we define $\widetilde{p}_D = p_D + \mathrm{Lap}(1/(n\epsilon))$ and $\widetilde{p}_D$ is $\epsilon$-DP.
Adding noise leads to inconsistency. To be specific, some estimated probabilities may be negative and the overall summation may not be 1. The following two kinds of post processing methods to address the inconsistency are widely adopted in marginal-based methods (cf., \citep{mckenna2019graphical,zhang2017privbayes}).

\textit{Normalization.}
We convert all the negative probabilities to zeros, and then normalize all the probabilities by a scalar such that their summation is 1.

\textit{$L^2$-projection.}
We project the inconsistent distribution onto the probability simplex using the $L^2$ metric. Specifically, for an inconsistent distribution $(a_1,\cdots, a_m)$, the output is
\[
(b_1,\cdots,b_m):=\mathop{\arg \min} \limits_{\widetilde{b}_i\geq 0,\, \sum \widetilde{b}_i=1} \sum_{i=1}^m (a_i-\widetilde{b}_i)^2.
\]

\subsection{Marginal Selection and Bayesian Networks}

It is well-known that marginal-based methods have the curse of dimensionality. One way to mitigate the curse of dimensionality is adopting Bayesian networks \citep{zhang2017privbayes}.

\textbf{Marginal selection.}
We first disassemble the raw dataset into a group of lower dimensional marginal datasets. Precisely, PrivBayes \citep{zhang2017privbayes} uses a sparse Bayesian network $\{x_1,\cdots, x_d\}$ to approximate the raw data. Each node $x_i$ corresponds to an attribute, and each edge from $x_j$ to $x_i$ represents $\mathds{P}[x_i \mid  x_j]$, which is the probability of $x_j$ causing $x_i$. We denote $\Pi_i:=\{j\mid x_j \rightarrow x_i\}$, which is the collection of all the attributes that affect $x_i$. \cite{zhang2017privbayes} also make the following assumptions on the network structure. Here $k$ is a pre-fixed parameter that is much smaller than $d$.
\begin{assumption}[Sparsity]\label{sparsity}
    The degree of the Bayesian network is no more than $k$. Precisely, for any $i$, the size of $\Pi_i$ is no more than $k$.
\end{assumption}
The second assumption ensures that the graph cannot contain loops, which aids sampling from the graph.
\begin{assumption}\label{non-cycle}
     For any $i$, we have $\Pi_i\subset \{x_1,\cdots, x_{i-1}\}$.
\end{assumption}

For example, the Bayesian network in Figure \ref{graph} satisfies Assumption \ref{sparsity} for $k=2$ and Assumption \ref{non-cycle}. The joint distribution is $\mathds{P}(5\mid 4,3)\mathds{P}(4\mid 3,2)\mathds{P}(3\mid 2,1)\mathds{P}(2\mid 1)$.

\begin{figure}[ht]
\centering
    \begin{tikzpicture}[scale=1]
    \SetGraphUnit{2}
     \Vertex{5}  \WE(5){4}   \NO(5){3}
     \NO(4){2} \NO(2){1}
     \Edges(2,4,5) \Edges(3,4,5) \Edges(1,3,5)
     \Edges(2,3,5) \Edges(1,2,4)
    \end{tikzpicture}
\caption{A Bayesian network over 5 attributes of degree 2}
\label{graph}
\end{figure}
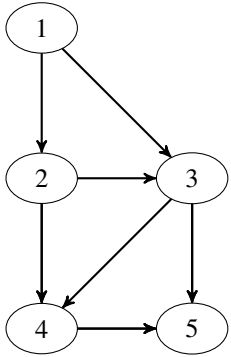

\textbf{DP Bayesian networks.}
In a Bayesian network, each low-dimensional marginal distribution $\mathds{P}(x_i,\Pi_i)$ is estimated by the marginal function defined by (\ref{eq:DefnMarginal}).
For privacy consideration, we add Laplace noise $\mathrm{Lap}(d/n \epsilon)$ to the marginal $\mathds{P}(x_i,\Pi_i)$ and obtain the DP distribution $\widehat{\mathds{P}}[x_i,\Pi_i]$ by using post processing to the noisy marginal.
Then the overall privacy budget can be calculated by the composition property of DP \citep{dwork2008differential,zhang2017privbayes} and is $\epsilon.$

\textbf{Generating synthetic data.}
The loop-free Bayesian network provides an efficient sampling approach. Precisely, we draw $x_i$ from $\widehat{\mathds{P}}[x_i\mid \Pi_i]$ in an increasing order of $i$. Recall that Assumption \ref{non-cycle} ensures that $x_j\notin \Pi_i$ for any $j$\textgreater $i$. Therefore, by the time $x_i$ is to be sampled, all nodes in $\Pi_i$ must have been sampled. This verifies that the sampling approach is practical. Moreover, sampling $x_i$ only needs the marginal $\widehat{\mathds{P}}[x_i,\Pi_i]$, instead of the full distribution. By Assumption \ref{sparsity}, it is a marginal with size less than $k+1$. This leads to a small computational load since $k$ is small. With this sampling method, one \citep{zhang2017privbayes} can show that the synthetic private distribution is $\widehat{\mathds{P}}[x_1,\cdots,x_d]=\prod_{i=1}^d \widehat{\mathds{P}}[x_i\mid \Pi_i]$.
\section{Accuracy of PrivBayes}
In this section, we develop some theoretical results of the accuracy of PrivBayes. We discuss the proof of these results briefly in Section \ref{proof sketch}.


\subsection{Statistical Distances}
The goal of this subsection is to establish the accuracy guarantee for PrivBayes with different post-processing methods: normalization and $L^2$-projection. By the term ``accuracy", we mean the TV or the $L^2$ distance between the synthetic distribution and the raw data distribution, respectively. Note that the error comes from two sources: 1) approximating the raw data by a Bayesian network that satisfies Assumption \ref{sparsity} and Assumption \ref{non-cycle} , 2) adding noise and post processing. The first error, however, only relies on the sparsity of the raw data. Since we aim to establish our result for general raw data, we only focus on the second one. Therefore, it is natural for us to make the following assumption.
\begin{assumption}\label{structure}
We assume the raw data distribution can be represented by a Bayesian network with $d$ vertices that satisfies Assumption \ref{sparsity} and Assumption \ref{non-cycle}.
\end{assumption}
With this assumption, PrivBayes (normalization) has the following accuracy guarantee.
\begin{theorem}\label{PrivBayes tv}
Assuming that the raw dataset $\mathds{D}$ is Boolean and satisfies Assumption \ref{structure}, then we have
\[
\norm{\widehat{\mathds{P}}-\mathds{P}}_{\mathrm{TV}}\leq \frac{12 d^2 2^{2k} (k+1)}{n \epsilon}\log \frac{2d}{\delta}
,
\]
with probability at least $1-\delta$  (with respect to the randomness of the Laplace mechanism and the same below). Here $\mathds{P}$ is the empirical distribution of $\mathds{D}$ and $\widehat{\mathds{P}}$ is the output of PrivBayes (normalization) with privacy budget $\epsilon$.
\end{theorem}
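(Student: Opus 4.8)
The plan is to exploit that both the empirical distribution $\mathds{P}$ and the synthetic distribution $\widehat{\mathds{P}}$ factorize over the same directed acyclic graph: $\mathds{P}$ does so by Assumption \ref{structure}, while $\widehat{\mathds{P}}[x_1,\dots,x_d]=\prod_{i=1}^d\widehat{\mathds{P}}[x_i\mid\Pi_i]$ holds by construction. I would first reduce the global total variation distance to a sum of per-node conditional discrepancies via the product-telescoping identity
\[
\widehat{\mathds{P}}(x) - \mathds{P}(x) = \sum_{j=1}^d \left(\prod_{i<j}\widehat{\mathds{P}}[x_i\mid\Pi_i]\right)\left(\widehat{\mathds{P}}[x_j\mid\Pi_j]-\mathds{P}[x_j\mid\Pi_j]\right)\left(\prod_{i>j}\mathds{P}[x_i\mid\Pi_i]\right).
\]
Taking absolute values, summing over $x$, and using Assumption \ref{non-cycle} (so that $\Pi_i\subset\{x_1,\dots,x_{i-1}\}$), the tail product $\prod_{i>j}\mathds{P}[x_i\mid\Pi_i]$ sums to one and the head product collapses to a parent marginal, giving
\[
\norm{\widehat{\mathds{P}}-\mathds{P}}_{\mathrm{TV}} \le \sum_{j=1}^d \sum_{\pi}\widehat{\mathds{P}}[\Pi_j=\pi]\,\norm{\widehat{\mathds{P}}[\cdot\mid\pi]-\mathds{P}[\cdot\mid\pi]}_{\mathrm{TV}}.
\]
It then suffices to control one node's conditional perturbation at a time.

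Second, I would bound the two ingredients feeding each node: the raw noise and the post-processing. Each marginal $\mathds{P}(x_i,\Pi_i)$ lives on a domain of size at most $2^{k+1}$ by Assumption \ref{sparsity}, so across all $d$ nodes there are at most $d\,2^{k+1}$ noisy cells, each perturbed by $\mathrm{Lap}(d/(n\epsilon))$. A Laplace tail estimate together with a union bound shows that, with probability at least $1-\delta$, every cell error is at most $\beta:=\frac{d}{n\epsilon}\log\frac{d\,2^{k+1}}{\delta}\le \frac{(k+1)d}{n\epsilon}\log\frac{2d}{\delta}$ --- this is exactly where the $(k+1)$ factor and the $\log\frac{2d}{\delta}$ of the statement originate. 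For normalization, truncating negatives cannot increase the $L^1$ distance to a genuine probability vector and the subsequent rescaling costs at most a further factor of two, so each node's normalized joint marginal obeys $\norm{\widehat{\mathds{P}}[x_j,\Pi_j]-\mathds{P}[x_j,\Pi_j]}_1 = O(2^{k}\beta)$.

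Third --- and this is the crux, the content of Lemma \ref{conditional-lemma} --- I must convert this joint $L^1$ error into a conditional TV error. Writing $e(x_j,\pi)$ for the joint error and $e(\pi)=\sum_{x_j}e(x_j,\pi)$, a direct computation gives
\[
\widehat{\mathds{P}}[x_j\mid\pi]-\mathds{P}[x_j\mid\pi] = \frac{e(x_j,\pi)\,\mathds{P}[\pi]-\mathds{P}[x_j,\pi]\,e(\pi)}{\widehat{\mathds{P}}[\pi]\,\mathds{P}[\pi]},
\]
whose numerator is linear in the already-controlled errors but whose denominator contains the parent marginal. The danger is that conditioning divides by a possibly tiny $\widehat{\mathds{P}}[\pi]$, so a small absolute error may blow up into a large conditional error; worse, the weight supplied by the telescoping is the \emph{synthetic} parent marginal $\widehat{\mathds{P}}[\Pi_j=\pi]$, which need not coincide with the denominator produced by node $j$'s own noisy estimate. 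The essential task is to show this division is tamed once the per-configuration errors are summed against the parent weights, arguing in particular that small-denominator configurations carry correspondingly small weight. I expect this denominator bookkeeping to be the principal obstacle; summing the conditional discrepancy over the at most $2^k$ parent configurations $\pi$ contributes a second factor $2^k$, which combines with the $2^k$ from the joint-domain $L^1$ bound to yield the $2^{2k}$ of the statement.

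Finally, I would assemble the pieces: the per-node bound is $O(2^{2k}\beta)$, and summing over the $d$ nodes contributes a factor $d$ that multiplies the factor $d$ already present in $\beta$, producing the $d^2$ in the statement. Collecting the stray constants (the factors of two from total variation and from normalization) into a single universal constant then gives the claimed bound $\tfrac{12\,d^2 2^{2k}(k+1)}{n\epsilon}\log\tfrac{2d}{\delta}$, valid on the probability-$(1-\delta)$ event of the concentration step.
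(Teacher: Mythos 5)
Your overall strategy---telescope the two product-form distributions into per-node conditional discrepancies, control each noisy marginal via a Laplace tail bound plus a normalization argument, then convert joint errors into conditional errors---is the same as the paper's. But you telescope in the wrong direction, and this is not cosmetic: it creates precisely the obstacle you flag as ``the principal obstacle'' and then leave unresolved, so the proof is incomplete at its crux. In your decomposition the head product consists of \emph{synthetic} conditionals, so the weight multiplying node $j$'s conditional error is the synthetic parent marginal $\widehat{\mathds{P}}[\Pi_j=\pi]$, accumulated from the noisy conditionals of nodes $1,\dots,j-1$. Meanwhile the joint-to-conditional conversion (the paper's Lemma \ref{conditional-lemma}, whose bound is $s\beta/\sum_j b_j$) produces a denominator equal to the \emph{raw} parent marginal $\mathds{P}(\Pi_j=\pi)$ (or, if you swap roles, node $j$'s own noisy marginal sum---a third, again different object). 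None of these coincide, so there is no cancellation, and ``small-denominator configurations carry correspondingly small weight'' is exactly what fails to be automatic: $\widehat{\mathds{P}}[\Pi_j=\pi]$ can put non-negligible mass on configurations where $\mathds{P}(\pi)$ is tiny. The natural repair---replace the synthetic weight by the raw one at the cost of $\sum_\pi\abs{\widehat{\mathds{P}}[\pi]-\mathds{P}[\pi]}$ and cap each conditional TV by $1$---yields a recursion for the prefix errors of the form $L_j\leq c\,L_{j-1}+O(2^{k}\beta)$ with $c>1$, i.e.\ a bound exponential in $d$, far worse than the theorem.

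The missing idea is simply the opposite orientation of the telescope, which is what the paper uses: put the \emph{raw} conditionals on the head and the synthetic ones on the tail,
\[
\abs{\prod_{i=1}^d\widehat{\mathds{P}}(x_i\mid\Pi_i)-\prod_{i=1}^d\mathds{P}(x_i\mid\Pi_i)}
\leq\sum_{j=1}^d\Bigl[\prod_{i<j}\mathds{P}(x_i\mid\Pi_i)\Bigr]\abs{\widehat{\mathds{P}}(x_j\mid\Pi_j)-\mathds{P}(x_j\mid\Pi_j)}\Bigl[\prod_{i>j}\widehat{\mathds{P}}(x_i\mid\Pi_i)\Bigr].
\]
Summing over $x_d,\dots,x_{j+1}$ collapses the synthetic tail (by Assumption \ref{non-cycle}), and the raw head, summed over the coordinates outside $\Pi_j$, equals exactly $\mathds{P}(\Pi_j)$; this cancels the $1/\mathds{P}(\Pi_j)$ in the conditional bound of Lemma \ref{conditional-tv}, leaving an unweighted sum over the at most $2\cdot 2^{k}$ values of $(x_j,\Pi_j)$ and over $j$, which gives the stated $d^2 2^{2k}(k+1)$ rate. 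Your concentration step and your $O(2^k\beta)$ normalization estimate are fine (indeed slightly sharper than the per-cell bound of Lemma \ref{noisy marginals}); what is missing is solely the correct orientation, which makes the telescoping weight and the conversion denominator the same raw quantity and is what the entire cancellation hinges on.
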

 \begin{proof}
 See Section \ref{proof sketch} for a proof sketch.
 \end{proof}
The other post-processing method ($L^2$-projection) we studied is also efficient. The following result verifies that PrivBayes ($L^2$-projection) enjoys the similar accuracy guarantee in terms of the $L^2$ distance.
Here, with a little bit abuse of notations, we denote the $L^2$ distance between two distributions as the $L^2$ distance between their density functions.
\begin{theorem}\label{PrivBayes L2}
Assuming the raw dataset $\mathds{D}$ is Boolean and Assumption \ref{structure} is satisfied, then we have
\[
\norm{\widehat{\mathds{P}}-\mathds{P}}_{L^2}\leq \frac{12 d^2  2^{k} (k+1)}{n \epsilon}\log \frac{2d}{\delta}
,
\]
with probability at least $1-\delta$. Here $\mathds{P}$ is the empirical distribution of $\mathds{D}$ and $\widehat{\mathds{P}}$ is the output of PrivBayes ($L^2$-projection) with privacy budget $\epsilon$.
\end{theorem}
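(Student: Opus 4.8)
The plan is to reduce the joint $L^2$ distance to a sum of per-conditional errors by a telescoping (hybrid) argument, and then control each conditional through the Laplace noise, mirroring the proof of Theorem~\ref{PrivBayes tv} but replacing each $L^1$-type estimate by its $L^2$ analogue. By Assumption~\ref{structure} the empirical distribution factorizes as $\mathds{P}=\prod_{i=1}^d \mathds{P}[x_i\mid\Pi_i]$, and by construction $\widehat{\mathds{P}}=\prod_{i=1}^d \widehat{\mathds{P}}[x_i\mid\Pi_i]$ over the \emph{same} network. I would introduce the hybrids $\widehat{\mathds{P}}^{(m)}:=\prod_{i\le m}\widehat{\mathds{P}}[x_i\mid\Pi_i]\prod_{i>m}\mathds{P}[x_i\mid\Pi_i]$, so that $\widehat{\mathds{P}}^{(0)}=\mathds{P}$ and $\widehat{\mathds{P}}^{(d)}=\widehat{\mathds{P}}$, and apply the triangle inequality
\[
\norm{\widehat{\mathds{P}}-\mathds{P}}_{L^2}\le \sum_{m=1}^d \norm{\widehat{\mathds{P}}^{(m)}-\widehat{\mathds{P}}^{(m-1)}}_{L^2}.
\]

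Each summand differs in a single factor. For the single-swap term I would expand the squared $L^2$ norm over $\Omega$, bound every squared probability by the probability itself (valid since each factor lies in $[0,1]$), marginalize out the downstream coordinates $x_{m+1},\dots,x_d$ using that the tail conditionals of $\mathds{P}$ sum to one, and marginalize the upstream coordinates using Assumption~\ref{non-cycle} (so that $\prod_{i<m}\widehat{\mathds{P}}[x_i\mid\Pi_i]$ marginalizes to $\widehat{\mathds{P}}[\Pi_m]$). This collapses the swap term to the parent-weighted conditional error $\sum_{\pi}\widehat{\mathds{P}}[\Pi_m=\pi]\sum_{x_m}(\widehat{\mathds{P}}[x_m\mid\pi]-\mathds{P}[x_m\mid\pi])^2$, a sum over the at most $2^k$ parent configurations allowed by Assumption~\ref{sparsity}.

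For the conditional errors I would invoke Lemma~\ref{conditional-lemma}, writing $\widehat{\mathds{P}}[x\mid\pi]-\mathds{P}[x\mid\pi]=\widehat{\mathds{P}}[\pi]^{-1}\big((\widehat{\mathds{P}}[x,\pi]-\mathds{P}[x,\pi])-\mathds{P}[x\mid\pi](\widehat{\mathds{P}}[\pi]-\mathds{P}[\pi])\big)$, which expresses the conditional error through the post-processed marginal errors of $(x,\Pi_m)$ and of $\Pi_m$. The decisive point for the $L^2$ bound is that the post-processing is the Euclidean projection onto the probability simplex, which is $1$-Lipschitz; since the true marginal already lies in the simplex, the projected marginal obeys $\norm{\widehat{\mathds{P}}[\cdot,\Pi_m]-\mathds{P}[\cdot,\Pi_m]}_{2}\le \norm{\text{Laplace noise}}_{2}$ directly. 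This replaces the $L^1$-type control incurred by the rescaling step of normalization with a genuine $L^2$ control, and is what improves the $2^{2k}$ of Theorem~\ref{PrivBayes tv} to $2^{k}$. The noise itself I would control by a union bound over the at most $d\,2^{k+1}$ perturbed cells together with the Laplace tail, giving entrywise control of order $\tfrac{d(k+1)}{n\epsilon}\log\tfrac{2d}{\delta}$ with probability at least $1-\delta$ (the $k+1$ absorbing the $\log$ of the cell count), and then aggregating over the at most $2^{k+1}$ cells of each marginal.

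The main obstacle is the factor $\widehat{\mathds{P}}[\pi]^{-1}$ that appears when a conditional is formed from marginals: the private parent marginal can be arbitrarily small, threatening to blow up the conditional error. I expect this to be the heart of Lemma~\ref{conditional-lemma}, and would handle it by splitting on whether $\widehat{\mathds{P}}[\pi]$ exceeds the noise scale — on the large branch $\widehat{\mathds{P}}[\pi]^{-1}$ is controlled, while on the small branch the parent weight $\widehat{\mathds{P}}[\pi]$ multiplying the (bounded) squared conditional error damps the contribution. A secondary, purely technical point is that the $L^2$ telescoping does not linearize as cleanly as the TV case, which is why the reduction in the second step relies on $q^2\le q$ for probabilities rather than on conditionals summing to one. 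Combining the per-swap bounds over the $d$ hybrids, the $\le 2^k$ parent configurations, and the $(k+1)$-dimensional marginal aggregation then yields the stated bound.
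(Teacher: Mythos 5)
Your skeleton---telescoping across the $d$ conditionals, non-expansiveness of the Euclidean projection onto the probability simplex (this is exactly Lemma~\ref{noisy L2}), and a ratio lemma converting marginal errors into conditional errors (Lemma~\ref{conditional-L2})---is the same as the paper's. The fatal step is your reduction ``bound every squared probability by the probability itself.'' Relaxing $q^2\le q$ on the upstream factors leaves the parent weight to the \emph{first} power, so each swap term collapses to $\sum_{\pi}\widehat{\mathds{P}}(\Pi_m=\pi)\sum_{x_m}\abs{\widehat{\mathds{P}}(x_m\mid\pi)-\mathds{P}(x_m\mid\pi)}^2$, while the conditional lemma produces squared conditional errors of size $\beta_\pi/\mathds{P}(\pi)^2$ (see (\ref{conditional-L2-3})); the resulting factor $\widehat{\mathds{P}}(\pi)/\mathds{P}(\pi)^2$ is unbounded, which is exactly the obstacle you flag. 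The paper never gives up the square: it keeps the upstream product squared, identifies $\prod_{i<j}\mathds{P}(x_i\mid\Pi_i)^2=\mathds{P}(x_1,\cdots,x_{j-1})^2$, and marginalizes the \emph{squares} via $\sum_A \mathds{P}(x_1,\cdots,x_{j-1})^2\le\bigl(\sum_A\mathds{P}(x_1,\cdots,x_{j-1})\bigr)^2=\mathds{P}(\Pi_j)^2$ (see (\ref{L2-5})). The parent weight therefore appears squared and cancels the $1/\mathds{P}(\Pi_j)^2$ from Lemma~\ref{conditional-L2} identically; this exact cancellation is the entire mechanism by which the $1/(n\epsilon)$ rate survives, and it is unavailable once $q^2\le q$ has been applied.

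Your proposed repair---thresholding on whether $\widehat{\mathds{P}}(\pi)$ exceeds the noise scale---cannot recover the rate, because your intermediate quantity is genuinely too large, not merely hard to bound. Let $M\asymp \frac{d}{n\epsilon}\log\frac{2^k d}{\delta}$ be the noise scale and take a parent configuration with $\mathds{P}(\pi)\asymp M$, $\mathds{P}(1\mid\pi)=1$, with noise of order $M$ landing on the cell $(0,\pi)$: then $\widehat{\mathds{P}}(\pi)\asymp M$ and the conditional error is $\Theta(1)$, so your first-power-weighted sum is $\Theta(M)$, whereas the true square-weighted swap contribution is only $\Theta(M^2)$. Hence the per-swap squared $L^2$ bound obtainable from your reduction is at best of order $M$, i.e.\ a per-swap $L^2$ error of order $\sqrt{M}$, which is a $1/\sqrt{n\epsilon}$ rate rather than the claimed $1/(n\epsilon)$. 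The threshold arithmetic says the same thing: the small branch contributes up to $2^{k+1}\tau$, the large branch up to roughly $2^{k+1}M^2/\tau$ (summing the per-$\pi$ errors $\beta_\pi$ via Lemma~\ref{noisy L2}), and balancing gives $\Theta(2^k M)$; in the regime $n\epsilon\gg d^2 2^k(k+1)\log(2d/\delta)$ where the theorem is non-vacuous, no choice of $\tau$ makes both branches as small as the required $O\bigl(d^3 2^{2k}\log^2(2d/\delta)/(n\epsilon)^2\bigr)$. The fix is simply to drop the $q^2\le q$ step and marginalize the squares as in (\ref{L2-5}).
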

\begin{proof}
The proof is similar to Theorem \ref{PrivBayes tv}. However it is still non-trivial and we discuss it in detail in Appendix \ref{L2-proof}.
\end{proof}
\textbf{Discussion.} Theorem \ref{PrivBayes tv} and Theorem \ref{PrivBayes L2} achieves a bound that is consistent (tends to 0 as $n$ tends to infinity) if $k\ll \log_2 (n\epsilon/d^2)$.
Moreover, a smaller $k$ leads to smaller upper bounds.
Since we assume that the real data and synthetic data share the same $k$, we conclude that PrivBayes achieve better performance on sparser real datasets.
The size of $k$ is often rather small in real application. For example, \cite{zhang2017privbayes} chose $k\leq4$ in the simulation. Theorem \ref{PrivBayes tv} also characterizes the reliance on the privacy budget $\epsilon$. Precisely, tighter privacy budget means better privacy guarantee, but leads to worse performance. Moreover, our rate is polynomial in the dimension $d$. Comparing with directly applying the Laplace mechanism to the whole domain (see Theorem \ref{laplace}), our result shows that by heavily deploying the network structure, the Bayesian network exponentially refines the rate.


\begin{theorem}\label{laplace}
Assuming that the raw dataset $\mathds{D}$ is Boolean, then we have
\[
\norm{\widetilde{\mathds{P}}-\mathds{P}}_{\mathrm{TV}}\leq \frac{d 2^{2d}}{n\epsilon} \log \frac{2}{\delta},
\]
with probability at least $1-\delta$. Here $\widetilde{\mathds{P}}$ is the synthetic distribution generated by directly applying Laplace mechanism to the entire domain.
\end{theorem}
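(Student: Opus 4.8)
The plan is to reduce the total-variation error to the aggregate magnitude of the Laplace noise injected into the $2^d$ cells of the full histogram, and then to bound that aggregate by a one-dimensional tail estimate combined with a union bound over the entire domain $\Omega$. The union bound over all $\abs{\Omega}=2^d$ cells is exactly the step that forces the exponential dependence on $d$, which is the point of the theorem: it exhibits the curse of dimensionality that the sparse network in Theorem \ref{PrivBayes tv} avoids.

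First I would fix notation for the direct mechanism. Write $p(x):=\mathds{P}(x)$ for the empirical cell probabilities. The direct approach releases $p(x)+\eta_x$ with i.i.d.\ noise $\eta_x\sim\mathrm{Lap}(\lambda)$, $x\in\Omega$, where $\lambda\asymp 1/(n\epsilon)$ is dictated by the $O(1/n)$ sensitivity of a single cell and the budget $\epsilon$, and then normalizes. Let $q(x):=\max(p(x)+\eta_x,0)$ be the clipped vector and $S:=\sum_{x\in\Omega}q(x)$ its total mass, so that $\widetilde{\mathds{P}}(x)=q(x)/S$.

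The crucial deterministic step is to show that post-processing does not amplify the error beyond the $\ell^1$ norm of the noise. I would split
\[
\widetilde{\mathds{P}}(x)-p(x)=q(x)\Big(\tfrac{1}{S}-1\Big)+\big(q(x)-p(x)\big),
\]
sum absolute values, and treat the two groups separately. Since $\sum_x q(x)=S$, the first group contributes exactly $\abs{1-S}$. For the clipping term, $p(x)\ge 0$ gives $\abs{q(x)-p(x)}\le\abs{\eta_x}$ pointwise (with equality when $p(x)+\eta_x\ge 0$, and $\abs{q(x)-p(x)}=p(x)\le\abs{\eta_x}$ otherwise); summing, $\sum_x\abs{q(x)-p(x)}\le\sum_x\abs{\eta_x}$, and since $\sum_x p(x)=1$ also $\abs{1-S}=\abs{\sum_x(p(x)-q(x))}\le\sum_x\abs{\eta_x}$. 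Combining, $\norm{\widetilde{\mathds{P}}-\mathds{P}}_{\mathrm{TV}}=\tfrac12\sum_x\abs{\widetilde{\mathds{P}}(x)-p(x)}\le\sum_{x\in\Omega}\abs{\eta_x}$. I expect the control of the renormalization factor $1/S$ to be the main obstacle, since a vanishing $S$ could a priori blow the error up; the bound $\abs{1-S}\le\sum_x\abs{\eta_x}$ is what rules this out in the regime where the estimate is non-trivial (and if that sum exceeds $1$, the claimed bound is itself $\ge 1$ and holds trivially because $\norm{\cdot}_{\mathrm{TV}}\le 1$).

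Finally I would bound the noise. For $\eta\sim\mathrm{Lap}(\lambda)$ one has $\mathds{P}[\abs{\eta}>t]=\exp(-t/\lambda)$, so a union bound over the $2^d$ cells gives $\max_{x\in\Omega}\abs{\eta_x}\le\lambda\log(2^d/\delta)\le\lambda\, d\log(2/\delta)$ with probability at least $1-\delta$, using $\log(2^d/\delta)=d\log 2+\log(1/\delta)\le d\log(2/\delta)$. Bounding $\sum_x\abs{\eta_x}\le 2^d\max_x\abs{\eta_x}$ and inserting $\lambda\le 2/(n\epsilon)$ yields $\norm{\widetilde{\mathds{P}}-\mathds{P}}_{\mathrm{TV}}\le\frac{2d\,2^{d}}{n\epsilon}\log\frac{2}{\delta}$; since $2\cdot 2^{d}=2^{d+1}\le 2^{2d}$ for $d\ge 1$, this is in particular at most $\frac{d\,2^{2d}}{n\epsilon}\log\frac{2}{\delta}$, which is the stated claim. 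The takeaway is structural: the $\log 2^d=d\log 2$ factor and the $2^d$ cell count are both unavoidable once independent noise is spread across the whole domain, in sharp contrast with the polynomial-in-$d$ rate obtained through the Bayesian network.
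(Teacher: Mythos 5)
Your proof is correct, and it takes a genuinely different (and in fact sharper) route than the paper. The paper simply invokes its Lemma \ref{noisy marginals} with $m=2^d$: that lemma gives a per-cell (max-norm) error of order $m\lambda\log(m/\delta)$ --- the factor $m$ coming from bounding the normalization discrepancy $\abs{1-S}/S$ inside \emph{each} cell's error --- and then sums over the $2^d$ cells, which is what produces $m^2 = 2^{2d}$ in the stated bound. You instead work with the $\ell^1$ aggregate directly: the decomposition $\widetilde{\mathds{P}}(x)-p(x)=q(x)(1/S-1)+(q(x)-p(x))$ lets you charge the normalization error $\abs{1-S}$ exactly once across the whole sum rather than once per cell, giving $\norm{\widetilde{\mathds{P}}-\mathds{P}}_{\mathrm{TV}}\leq 2\sum_x\abs{\eta_x}\leq 2^{d+1}\max_x\abs{\eta_x}$, i.e.\ a bound of order $d\,2^{d}/(n\epsilon)\log(2/\delta)$. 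This is exponentially smaller than the theorem's $d\,2^{2d}/(n\epsilon)\log(2/\delta)$, which it therefore implies (using $2^{d+1}\leq 2^{2d}$ for $d\geq 1$), and it shows the stated bound is loose by a factor of roughly $2^{d-1}$. What the paper's route buys is economy --- one lemma serves both the PrivBayes marginals and this benchmark --- at the cost of tightness; what yours buys is the sharper rate, a cleaner treatment of the degenerate event $S\approx 0$ (via the observation that the claim is trivial when $\sum_x\abs{\eta_x}\geq 1$, instead of the paper's ``for all large $n$'' qualifier), and a more careful accounting of the sensitivity constant ($\lambda\leq 2/(n\epsilon)$ covers the full-histogram $L^1$ sensitivity $2/n$, whereas the paper uses $1/n$). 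The qualitative conclusion the paper wants --- exponential dependence on $d$ for the direct mechanism versus the polynomial-in-$d$ PrivBayes rate --- survives intact under your sharper bound.
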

\begin{proof}

By the definition of Laplace  mechanism, we add i.i.d.\ ${\mathrm{Lap}}(1/n \epsilon)$ noise to all the $2^d$ choices of $\mathds{P}(x_1,\cdots,x_d)$ and then normalize them.Then, Theorem \ref{laplace} can be proved similarly as Lemma \ref{noisy marginals} with $m=2^d$. 
\end{proof}

\subsection{Utility Errors}
 In real world practice, the raw training data in supervised learning may contain sensitive information, like personal preference of users. Therefore, it is not allowed to be released to the public. An alternative is to release differentially private synthetic data instead of the raw data. A central problem is the ``utility" of synthetic training data, which means the evaluation of synthetic data in downstream tasks. We explain the term utility in detail below.

Consider a dataset $\mathds{D}=\{x^{(i)}\}_{i=1}^n$ drawn from the domain $\Omega$ with sample size $n$. We denote its corresponding synthetic dataset as $\widehat{\mathds{D}}=\{\widehat{x}^{(i)}\}_{i=1}^{\widehat{n}}$ of size $\widehat{n}$. We use the empirical risk minimization (ERM) model to capture the supervised learning. Precisely, the ERM estimators for the raw dataset $\mathds{D}$  and the synthetic data $\widehat{\mathds{D}}$ are defined as
\begin{align}\label{erm-define}
\widehat{\theta}=\mathop{\arg\min}\limits_{\theta\in \mathcal{C}} \frac{1}{n} \sum_{i=1}^n \ell(\theta,x^{(i)})+\lambda J(\theta),\nonumber \\
\widehat{\theta}_{\mathrm{syn}}=\mathop{\arg\min}\limits_{\theta\in \mathcal{C}} \frac{1}{\widehat{n}} \sum_{i=1}^{\widehat{n}} \ell(\theta,\widehat{x}^{(i)})+\lambda J(\theta),
\end{align}
respectively. Here $\mathcal{C}$ is a convex closed set. We assume the loss function $\ell(\cdot,x)$ is convex on $\mathcal{C}$ and is $L$-Lipschitz in $x$ for some $L\geq 0$. The regularization term $J(\cdot)$ is adopted to prevent over-fitting. The model captures a wide range of applications. For example, given a data point $x^{(i)}=(u_i,v_i)\in\{0,1\}^{d+1}$, by defining the hinge loss $\ell(\theta,x^{(i)})=(1-\left\langle\theta,u_i\right\rangle \cdot v_i)_+$, we recover the popular support vector machine (SVM) classifier. The loss is $\sqrt{d+1}$-Lipschitz in $\theta$ since $\norm{x^{(i)}}_2\leq \sqrt{d+1}$.

The utility of the synthetic dataset $\widehat{\mathds{D}}$ measures whether $\widehat{\theta}_{\mathrm{syn}}$ and $\widehat{\theta}$ perform similarly on the prediction task \citep{esteban2017real}. To be specific, the following metric is used to evaluate the utility,
\begin{equation}\label{utility-define}
U(\widehat{\mathds{D}},{\mathds{D}}):=\frac{1}{n} \abs{R(\widehat{\theta})-R(\widehat{\theta}_{\mathrm{syn}})},
\end{equation}
where $R(\theta)=\sum_{i=1}^n \ell(\theta,x^{(i)})$ is the empirical risk on $\mathds{D}$ (\citep{rankin2020reliability,hittmeir2019utility}). Intuitively, the asymptotic behavior of $U(\widehat{\mathds{D}},{\mathds{D}})$ is affected by the difference between distributions of synthetic data and true data. This fact is characterized in Theorem \ref{PrivBayes utility}.

We first make the following assumption on the bound of the loss function $\ell(\cdot,\cdot)$, which is quite natural due to its continuity \citep{bassily2014private}.
\begin{assumption}\label{boundedness}
For any $\theta\in \mathcal{C}$ and any data point $x$ in $\Omega$, we have $\abs{\ell(\theta,x)}\leq1$.
\end{assumption}

\textbf{Generating synthetic dataset from PrivBayes.} We still denote the raw dataset as $\mathds{D}$ and denote  $\mathds{P}$ its empirical distribution. Its corresponding output of PrivBayes is a distribution denoted as $\mathds{Q}$. To generate the synthetic training data $\widehat{\mathds{D}}$, we draw $\widehat{n}$ i.i.d.\ samples from $\mathds{Q}$. The corresponding empirical distribution is denoted as $\widehat{\mathds{Q}}$.

With the above preparation, we are now ready to state our result that characterizes the utility of PrivBayes.
\begin{theorem}\label{PrivBayes utility}
If Assumption \ref{structure} and Assumption \ref{boundedness} hold and the raw dataset $\mathds{D}$ is Boolean, then we have
\begin{align}\label{PrivBayes utility1}
U(\widehat{\mathds{D}},\mathds{D})
&\leq C(\lambda)+C_1 \left\| \mathds{Q} -  \mathds{P}\right\|_{\mathrm{TV}}+2\mathcal{R}_{C}+\sqrt{\frac{\log \frac1\delta}{2\widehat{n}}}\nonumber
\\
&\leq C(\lambda)+C_1 \frac{2^{2k} d^2(k+1)} {n\epsilon}\ln \frac{2 d}{\delta}+2\mathcal{R}_{C}+\sqrt{\frac{\log \frac1\delta}{2\widehat{n}}},
\end{align}
with probability at least $1-\delta$. Here $\mathcal{R}_C$ is the Rademacher complexity of the function class $\{x\mapsto \ell(\theta,x) \mid \theta\in \mathcal{C}\}$ and $C_1$ is a positive universal constant. The term $C(\lambda)$ is non-negative and vanishes when $\lambda=0$, namely $C(0)=0$.
\end{theorem}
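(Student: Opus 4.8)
The plan is to read $U(\widehat{\mathds{D}},\mathds{D})$ as a gap between two values of the real empirical risk and to route the comparison through three risk functionals: the real empirical risk $L_{\mathds{P}}(\theta):=\frac1n R(\theta)=\mathbb{E}_{x\sim\mathds{P}}[\ell(\theta,x)]$, the synthetic population risk $L_{\mathds{Q}}(\theta):=\mathbb{E}_{x\sim\mathds{Q}}[\ell(\theta,x)]$, and the synthetic empirical risk $L_{\widehat{\mathds{Q}}}(\theta):=\frac{1}{\widehat{n}}\sum_{i=1}^{\widehat{n}}\ell(\theta,\widehat{x}^{(i)})$. In this notation $U=\abs{L_{\mathds{P}}(\widehat\theta)-L_{\mathds{P}}(\widehat\theta_{\mathrm{syn}})}$, while $\widehat\theta$ and $\widehat\theta_{\mathrm{syn}}$ minimize the regularized objectives $F_{\mathds{P}}=L_{\mathds{P}}+\lambda J$ and $F_{\widehat{\mathds{Q}}}=L_{\widehat{\mathds{Q}}}+\lambda J$. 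The structural fact I would exploit is that $\lambda J$ cancels in the difference $F_{\mathds{P}}-F_{\widehat{\mathds{Q}}}=L_{\mathds{P}}-L_{\widehat{\mathds{Q}}}$, so the regularizer can only survive through the gap $\lambda\abs{J(\widehat\theta)-J(\widehat\theta_{\mathrm{syn}})}$; I would set $C(\lambda):=\lambda\abs{J(\widehat\theta)-J(\widehat\theta_{\mathrm{syn}})}$, which is non-negative and vanishes at $\lambda=0$.

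First I would bound the two directions of $L_{\mathds{P}}(\widehat\theta)-L_{\mathds{P}}(\widehat\theta_{\mathrm{syn}})$ separately. For $L_{\mathds{P}}(\widehat\theta_{\mathrm{syn}})-L_{\mathds{P}}(\widehat\theta)$ I telescope through $L_{\widehat{\mathds{Q}}}$; optimality of $\widehat\theta_{\mathrm{syn}}$ for $F_{\widehat{\mathds{Q}}}$ makes the middle term $L_{\widehat{\mathds{Q}}}(\widehat\theta_{\mathrm{syn}})-L_{\widehat{\mathds{Q}}}(\widehat\theta)\le\lambda[J(\widehat\theta)-J(\widehat\theta_{\mathrm{syn}})]\le C(\lambda)$, leaving two cross terms of the form $\abs{L_{\mathds{P}}(\theta)-L_{\widehat{\mathds{Q}}}(\theta)}$ evaluated at $\widehat\theta_{\mathrm{syn}}$ and at $\widehat\theta$. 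For the opposite direction, optimality of $\widehat\theta$ for $F_{\mathds{P}}$ gives directly $L_{\mathds{P}}(\widehat\theta)-L_{\mathds{P}}(\widehat\theta_{\mathrm{syn}})\le\lambda[J(\widehat\theta_{\mathrm{syn}})-J(\widehat\theta)]\le C(\lambda)$. Taking the maximum, $U\le C(\lambda)+\abs{L_{\mathds{P}}(\widehat\theta_{\mathrm{syn}})-L_{\widehat{\mathds{Q}}}(\widehat\theta_{\mathrm{syn}})}+\abs{L_{\widehat{\mathds{Q}}}(\widehat\theta)-L_{\mathds{P}}(\widehat\theta)}$, so it remains to control $\abs{L_{\mathds{P}}(\theta)-L_{\widehat{\mathds{Q}}}(\theta)}$ at one fixed and one data-dependent point.

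I would split $\abs{L_{\mathds{P}}(\theta)-L_{\widehat{\mathds{Q}}}(\theta)}\le\abs{L_{\mathds{P}}(\theta)-L_{\mathds{Q}}(\theta)}+\abs{L_{\mathds{Q}}(\theta)-L_{\widehat{\mathds{Q}}}(\theta)}$. The first piece is a distribution-shift term: since $\abs{\ell}\le1$ by Assumption \ref{boundedness}, $\abs{L_{\mathds{P}}(\theta)-L_{\mathds{Q}}(\theta)}=\abs{\sum_x\ell(\theta,x)(\mathds{P}(x)-\mathds{Q}(x))}\le 2\norm{\mathds{P}-\mathds{Q}}_{\mathrm{TV}}$ uniformly in $\theta$, and summing over the two evaluation points produces the term $C_1\norm{\mathds{Q}-\mathds{P}}_{\mathrm{TV}}$. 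The second piece is a generalization error between $\mathds{Q}$ and its $\widehat{n}$-sample empirical version, and here the two points must be treated asymmetrically: $\widehat\theta$ depends only on the (fixed) real data, so a single-function Hoeffding bound gives $\abs{L_{\mathds{Q}}(\widehat\theta)-L_{\widehat{\mathds{Q}}}(\widehat\theta)}\le\sqrt{\log(2/\delta)/(2\widehat{n})}$, whereas $\widehat\theta_{\mathrm{syn}}$ is a function of the synthetic sample and forces the uniform bound $\abs{L_{\mathds{Q}}(\widehat\theta_{\mathrm{syn}})-L_{\widehat{\mathds{Q}}}(\widehat\theta_{\mathrm{syn}})}\le\sup_{\theta\in\mathcal{C}}\abs{L_{\mathds{Q}}(\theta)-L_{\widehat{\mathds{Q}}}(\theta)}\le 2\mathcal{R}_{C}+\sqrt{\log(1/\delta)/(2\widehat{n})}$ through the standard symmetrization plus McDiarmid argument. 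This asymmetry is precisely what yields the single $2\mathcal{R}_{C}$ factor in the statement.

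Combining the three displays, taking a union bound over the high-probability events, and absorbing numerical constants into $C_1$ gives the first inequality of (\ref{PrivBayes utility1}); the second inequality then follows by substituting the TV accuracy bound of Theorem \ref{PrivBayes tv} for $\norm{\mathds{Q}-\mathds{P}}_{\mathrm{TV}}$ (with $\mathds{Q}$ in the role of the PrivBayes output $\widehat{\mathds{P}}$). I expect the main obstacle to be the bookkeeping around the regularizer: one must check that $\lambda J$ genuinely cancels in every $F_{\mathds{P}}-F_{\widehat{\mathds{Q}}}$ difference and that the residual $\lambda$-gap is cleanly isolated into a term satisfying $C(0)=0$, rather than leaking into the $n,\epsilon,\widehat{n}$-dependent terms. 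A secondary but essential point is noticing that $\widehat\theta$ is deterministic with respect to the synthetic sampling, so that only $\widehat\theta_{\mathrm{syn}}$ needs the more expensive uniform-convergence (Rademacher) control.
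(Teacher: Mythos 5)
Your proposal is correct, but it takes a genuinely different route from the paper's proof. The paper's proof (equation (\ref{decompose})) inserts the \emph{non-regularized} ERM estimators $\theta^*$ and $\theta^*_{\mathrm{syn}}$ as intermediate points, producing seven terms: terms (iii) and (vii), the risk gaps between regularized and non-regularized estimators, are declared to be $C(\lambda)$, and the comparison of the two non-regularized minimizers (term (iv)) needs its own lemma (Lemma \ref{utility-3}), which re-runs an optimality-plus-add-and-subtract argument and consumes two additional TV terms and two additional uniform-deviation terms. You never introduce $\theta^*$ or $\theta^*_{\mathrm{syn}}$: you telescope only through $L_{\widehat{\mathds{Q}}}$, use the optimality of the \emph{regularized} objectives (under which $\lambda J$ cancels), and isolate the regularizer as the explicit quantity $C(\lambda)=\lambda\abs{J(\widehat\theta)-J(\widehat\theta_{\mathrm{syn}})}$, which is indeed non-negative with $C(0)=0$; note also that your reverse direction $L_{\mathds{P}}(\widehat\theta)-L_{\mathds{P}}(\widehat\theta_{\mathrm{syn}})\le C(\lambda)$ comes for free from optimality of $\widehat\theta$, with no stochastic terms at all. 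Your second genuine refinement is the asymmetric treatment of the sampling error: since $\widehat\theta$ is measurable with respect to the real data (hence, conditionally on $\mathds{Q}$, independent of the synthetic draw), a pointwise Hoeffding bound suffices at $\widehat\theta$, and the uniform Rademacher bound (the paper's Lemma \ref{rademarcher}) is needed only at $\widehat\theta_{\mathrm{syn}}$; the paper instead applies the uniform bound at both evaluation points (terms (ii) and (v)) and again inside term (iv). The net effect is that your accounting gives roughly $C(\lambda)+4\norm{\mathds{P}-\mathds{Q}}_{\mathrm{TV}}+2\mathcal{R}_C$ plus two terms of order $\sqrt{\log(1/\delta)/\widehat{n}}$, whereas the paper's own proof yields $C(\lambda)+8\norm{\mathds{P}-\mathds{Q}}_{\mathrm{TV}}+8\mathcal{R}_C+4\sqrt{\log(1/\delta)/(2\widehat{n})}$, so your route is both shorter and closer to the constants actually displayed in the theorem statement (the TV constant is absorbed into $C_1$ either way). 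What the paper's route buys is modularity and interpretability: its $C(\lambda)$ is expressed as excess-risk gaps rather than a regularizer gap, and each step is a self-contained reusable lemma (Lemmas \ref{utility-tv}, \ref{rademarcher}, \ref{utility-3}). Two cosmetic caveats, at the same level of rigor as the paper itself: with $\abs{\ell}\le 1$ the losses range over $[-1,1]$, so Hoeffding and McDiarmid actually give $\sqrt{2\log(2/\delta)/\widehat{n}}$ rather than $\sqrt{\log(2/\delta)/(2\widehat{n})}$, and the final union bound over the Laplace-noise event (needed for the second inequality via Theorem \ref{PrivBayes tv}), the Hoeffding event, and the uniform-deviation event inflates the logarithms by constant factors; neither affects the form of the result.
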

\begin{proof}
 See Section \ref{proof sketch} for a proof sketch.
 \end{proof}
\textbf{Discussion.} The term $C(\lambda)$ comes from the regularization process.
$C(\lambda)=0$ if no regularization is applied ($\lambda=0$).
In real practice, $\lambda$ is often much smaller than $d^2/n\epsilon$. Therefore $C(\lambda)$ is also relatively small. The Rademacher complexity in equation (\ref{PrivBayes utility1}) comes from the sampling process.

Our result then implies that, when the sample size $n$ and $\widehat{n}$ are sufficiently large and the regularization parameter $\lambda$ is sufficiently small, the quality loss caused by the private mechanism is rather small. In other words, private synthetic data generated by PrivBayes performs similarly to raw data in downstream learning tasks.

\section{Lower Bound}
In this section, we complete the picture by deriving a lower bound for the TV-distance between synthetic private distribution and the raw data distribution.

\textbf{Notations and conventions.}
As before, the raw dataset $\mathds{D}$ is of size $n$ with empricial distribution $\mathds{P}$. The data domain is denoted as $\Omega$. A synthetic data generator is a randomized algorithm that sends a dataset of size $n$ to a distribution over $\Omega$. We also need the following assumption on the range of the parameters.
\begin{assumption}\label{range}
We assume that $d/\epsilon \ll n \ll \abs{\Omega}$.
\end{assumption}
The first part of this assumption allows a rather wide choice of $\epsilon$ in practice. For instance, in two real datasets ACS \citep{ruggles2015integrated} and Adult \citep{bache2013uci}, the size $n\approx 40,000$, the dimension $d\approx 40$. Then Assumption \ref{range} only requires $\epsilon \geq 1/1,000$. Moreover, the size of $\Omega$ is at least $2^d$, which is clearly much larger than $n$. Therefore, the second part of Assumption \ref{range} holds for real world datasets.

We now state Theorem \ref{tv-lower} that establishes the lower bound for TV-distance.
\begin{theorem}\label{tv-lower}
If Assumption \ref{range} holds, then for any synthetic data generator $A(\cdot)$ with privacy budget $\epsilon$, and for any $0\leq\delta\leq 1/2$, there exists a dataset $\mathds{D}$ of size $n$, such that
\[
\norm{A(\mathds{D})-\mathds{P}}_{\mathrm{TV}}\geq \frac{1 }{n\epsilon}\log (\delta|\Omega|)
\]
with probability at least $1-2\delta$. Here $\mathds{P}$ is the empirical distribution of $\mathds{D}$.
\end{theorem}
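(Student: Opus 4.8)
The plan is to establish this impossibility result through a \emph{packing argument} driven by the group-privacy property of $\epsilon$-DP. Iterating the $\epsilon$-DP inequality along a path of $T$ single-element edits yields the group-privacy bound $\mathds{P}[A(D)\in S]\leq e^{T\epsilon}\,\mathds{P}[A(D')\in S]$ whenever $D$ and $D'$ are at Hamming distance at most $T$, for every measurable $S$. The upshot is that an $\epsilon$-DP generator is \emph{forced} to produce near-identical output laws on Hamming-close inputs, regardless of how far apart their empirical distributions sit in TV. Thus the entire task reduces to exhibiting a large family of datasets that are simultaneously close in Hamming distance yet well separated in TV, and this construction is where I expect the real work to lie.

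Concretely, I would fix $T:=\lfloor \tfrac{1}{\epsilon}\log(\delta\abs{\Omega})\rfloor$ and build datasets $D_1,\dots,D_M$ that share a common block of $n-T$ points and differ only in the remaining $T$ slots, where $D_i$ places $T$ copies of a single distinct point $\omega_i\in\Omega$. Writing $\mathds{P}_i$ for the empirical distribution of $D_i$, two members $D_i,D_j$ then disagree in exactly those $T$ slots, so they are \emph{pairwise} at Hamming distance $T$, while $\norm{\mathds{P}_i-\mathds{P}_j}_{\mathrm{TV}}=T/n$ since $\mathds{P}_i$ puts mass $T/n$ on $\omega_i$ and $\mathds{P}_j$ puts none there. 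Because $n\ll\abs{\Omega}$ by Assumption \ref{range}, there remain at least $M\approx\abs{\Omega}$ admissible choices of $\omega_i$; and because $d/\epsilon\ll n$ forces $T\leq\tfrac{1}{\epsilon}\log\abs{\Omega}\ll n$, the construction is feasible, as we never need to edit more than $n$ points. Setting $r:=T/(2n)$, the TV-balls $S_i:=\{Q:\norm{Q-\mathds{P}_i}_{\mathrm{TV}}<r\}$ are pairwise disjoint.

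The decisive step is then a one-line averaging bound. Disjointness gives $\sum_{i=1}^M\mathds{P}[A(D_1)\in S_i]\leq 1$, and group privacy from $D_1$ to each $D_i$ gives $\mathds{P}[A(D_1)\in S_i]\geq e^{-T\epsilon}\mathds{P}[A(D_i)\in S_i]$, so the average success probability satisfies $\tfrac{1}{M}\sum_i\mathds{P}[A(D_i)\in S_i]\leq e^{T\epsilon}/M$. The choice of $T$ makes $e^{T\epsilon}=\delta\abs{\Omega}$, so with $M\approx\abs{\Omega}$ we obtain $e^{T\epsilon}/M\leq 2\delta$; hence some index $i^\ast$ has $\mathds{P}[A(D_{i^\ast})\in S_{i^\ast}]\leq 2\delta$. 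Taking $\mathds{D}:=D_{i^\ast}$ and passing to complements yields $\norm{A(\mathds{D})-\mathds{P}}_{\mathrm{TV}}\geq r$ with probability at least $1-2\delta$, matching the stated rate $\tfrac{1}{n\epsilon}\log(\delta\abs{\Omega})$ up to the absolute constant absorbed by the ``$\ll$'' relations of Assumption \ref{range}.

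The main obstacle is the packing construction itself: one must balance the competing demands of small Hamming distance (to keep the group-privacy blow-up $e^{T\epsilon}$ manageable) against large TV separation and large cardinality $M$, and verify that the optimal choice $T\approx\tfrac{1}{\epsilon}\log(\delta\abs{\Omega})$ remains admissible (namely $T\le n$ and $M\le\abs{\Omega}$), which is exactly what Assumption \ref{range} secures. The averaging inequality and the translation into a TV-distance failure event are then routine.
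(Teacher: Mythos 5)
Your proposal is correct and follows essentially the same route as the paper's proof: the identical packing construction (a common anchor block plus $T\approx\frac{1}{\epsilon}\log(\delta\abs{\Omega})$ copies of a distinguished point per dataset), group privacy to transfer output laws across Hamming distance $T$, disjointness of the TV balls, and a counting bound over $\approx\abs{\Omega}$ datasets. The only differences are cosmetic --- you phrase the final step as an averaging argument where the paper argues by contradiction, and your sup-convention for TV (versus the paper's $L^1$ convention) accounts for the factor-of-2 slack you note.
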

\begin{proof}
See Appendix \ref{lower-proof} for a detialed proof.
\end{proof}

 Choosing $\delta=1/4$ in Theorem \ref{tv-lower} yields the following corollary.
\begin{corollary}
If $\abs{\Omega}\geq 4\exp(d)$, then for any synthetic data generator $A(\cdot)$ with privacy budget $\epsilon$, there exists a dataset $\mathds{D}$ of size $n$ such that
\[
\norm{A(\mathds{D})-\mathds{P}}_{\mathrm{TV}}\geq \frac{d}{n\epsilon}
\]
with probability at least $1/2$. Here $\mathds{P}$ is the empirical distribution of $\mathds{D}$.
\end{corollary}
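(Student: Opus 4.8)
The final statement is an immediate specialization of Theorem~\ref{tv-lower}, so the plan is first to record that reduction and then to outline how I would establish Theorem~\ref{tv-lower} itself, since that is where the real content lies. Setting $\delta = 1/4$ in Theorem~\ref{tv-lower} yields, for any $\epsilon$-DP generator $A$, a dataset $\mathds{D}$ with $\norm{A(\mathds{D})-\mathds{P}}_{\mathrm{TV}} \geq \frac{1}{n\epsilon}\log(\abs{\Omega}/4)$ holding with probability at least $1 - 2\cdot\tfrac14 = \tfrac12$. Under the hypothesis $\abs{\Omega} \geq 4\exp(d)$ we have $\abs{\Omega}/4 \geq \exp(d)$, hence $\log(\abs{\Omega}/4) \geq d$, and the bound collapses to $\frac{d}{n\epsilon}$. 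Thus, once Theorem~\ref{tv-lower} is granted, the corollary is a one-line substitution.

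The substantive target is therefore Theorem~\ref{tv-lower}, and the plan there is a \textbf{packing argument driven by group privacy}, run by contradiction. First I would build a hard family of datasets indexed by the domain: fix a filler symbol $\omega_0$ and, for each $\omega \in \Omega \setminus \{\omega_0\}$, let $D_\omega$ place $m$ copies of $\omega$ on a designated block of coordinates and the filler on the remaining $n-m$ coordinates. Two such datasets then disagree on exactly the length-$m$ block, so they sit at Hamming distance $m$, while their empirical distributions $\mathds{P}_\omega$ differ only in the two coordinates $\omega,\omega'$, giving a pairwise separation $\norm{\mathds{P}_\omega - \mathds{P}_{\omega'}}_{\mathrm{TV}} \asymp m/n$. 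This is the key tension: moving $m$ points changes the empirical law by order $m/n$, and $m$ is also exactly the group-privacy distance.

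Next I would assume the conclusion fails at some level $\tau$, i.e.\ for \emph{every} dataset the output lands within TV-distance $\tau$ of its empirical law with probability exceeding $2\delta$. Choosing $\tau$ just below $m/n$ (up to the normalization factor) makes the accuracy balls $S_\omega = \{Q : \norm{Q-\mathds{P}_\omega}_{\mathrm{TV}} < \tau\}$ pairwise disjoint. Group privacy transfers accuracy mass onto a single reference dataset $D_r$ at the cost of $e^{-m\epsilon}$, giving $\mathds{P}[A(D_r)\in S_\omega] \geq e^{-m\epsilon}\,\mathds{P}[A(D_\omega)\in S_\omega] > 2\delta\,e^{-m\epsilon}$ for each $\omega$. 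Summing the disjoint events and using that probabilities are at most $1$ yields
\[
1 \;\geq\; \sum_{\omega} \mathds{P}[A(D_r)\in S_\omega] \;>\; 2\delta\,\abs{\Omega}\,e^{-m\epsilon},
\]
hence $m > \frac{1}{\epsilon}\log(2\delta\abs{\Omega})$. Optimizing $m$ at its largest feasible value $m \asymp \frac{1}{\epsilon}\log(\delta\abs{\Omega})$ forces $\tau \gtrsim \frac{1}{n\epsilon}\log(\delta\abs{\Omega})$, the claimed bound.

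The main obstacle is getting this Hamming-versus-TV tradeoff to close with the right constant while keeping the construction feasible. The packing needs $m$ large enough for the counting bound to bite, yet the signal block must fit inside the dataset, i.e.\ $m \leq n$; here Assumption~\ref{range} is essential, since $\log\abs{\Omega} \asymp d$ for the Boolean domain and $d/\epsilon \ll n$ guarantees $\frac{1}{\epsilon}\log(\delta\abs{\Omega}) \leq n$, while $n \ll \abs{\Omega}$ supplies enough distinct symbols for a full-size packing and clean separations. The remaining delicate points are the exact normalization of the TV distance (which governs whether the leading constant is $\tfrac12$ or $1$), the floor in the choice of $m$, and the careful translation between the failure probability $2\delta$ and the logarithmic factor $\log(2\delta\abs{\Omega}) \approx \log(\delta\abs{\Omega})$.
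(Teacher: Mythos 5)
Your proposal is correct and takes essentially the same route as the paper: the corollary is the identical one-line specialization (set $\delta=1/4$ in Theorem~\ref{tv-lower} and use $\abs{\Omega}\geq 4\exp(d)$ so that $\log(\abs{\Omega}/4)\geq d$), and your packing-plus-group-privacy contradiction for Theorem~\ref{tv-lower} --- a filler element, $\alpha\asymp\log(\delta\abs{\Omega})/\epsilon$ planted copies, pairwise-disjoint TV-balls, and a counting bound over disjoint events --- is precisely the argument given in Appendix~\ref{lower-proof}. The delicate points you flag at the end (the leading constant governed by the TV normalization, and the feasibility requirement $\alpha\leq n$ supplied by Assumption~\ref{range}) are exactly the fine points present in the paper's own proof as well.
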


\textbf{Discussion and comparison.}
Comparison with the upper bound in Theorem \ref{PrivBayes tv},  PrivBayes is sub-optimal up to a $d$ factor.
The sub-optimality is caused by the composition property of DP (the dataset is processed $d$ times in a Bayesian network) and the structure of a Bayesian network.



\section{Proof Sketch for the Technical Results}\label{proof sketch}
\subsection{Proof Sketch for Theorem \ref{PrivBayes tv}}
We begin with a technical lemma that characterizes the normalization process. See Appendix \ref{tv-proof} for its detailed proof.
\begin{lemma}\label{noisy marginals}
For a distribution $(a_1,\cdots,a_m)$, we denote its outcome after adding i.i.d.\ ${\mathrm{Lap}}(d/n\epsilon)$ noise and normalizing it as $(b_1,\cdots,b_m)$. Then, for all large $n$ and all $\delta>0$, it holds that
\[
\max _i\abs{a_i-b_i}\leq \frac{3md}{n\epsilon}\log \frac{m}{\delta},
\]
with probability at least $1-\delta$.
\end{lemma}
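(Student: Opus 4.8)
The plan is to isolate the single source of randomness---the Laplace noise---via a tail bound, and then carry out an entirely deterministic analysis of how truncation and rescaling propagate a uniform noise bound to the final coordinates. Write $\lambda = d/(n\epsilon)$ and let $\eta_i \sim \mathrm{Lap}(\lambda)$ be the i.i.d.\ noises, so that the perturbed values are $a_i + \eta_i$, the truncated values are $\widehat{a}_i = \max(a_i + \eta_i,\, 0)$, and $b_i = \widehat{a}_i / S$ with $S = \sum_j \widehat{a}_j$.

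First I would establish a high-probability envelope on the noise. Since $\mathds{P}[\abs{\eta_i} > t] = \exp(-t/\lambda)$, a union bound over the $m$ coordinates gives $\mathds{P}[\max_i \abs{\eta_i} > t] \leq m \exp(-t/\lambda)$; choosing $t = T := \lambda \log(m/\delta) = \frac{d}{n\epsilon}\log\frac{m}{\delta}$ yields $\max_i \abs{\eta_i} \leq T$ with probability at least $1 - \delta$. All remaining steps are conditioned on this event and are purely deterministic.

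Next I would control the truncation coordinate-wise. Because each $a_i \geq 0$, a short case analysis shows $\abs{\widehat{a}_i - a_i} \leq \abs{\eta_i} \leq T$: when $a_i + \eta_i \geq 0$ the truncation is inactive and the error is exactly $\abs{\eta_i}$, while when $a_i + \eta_i < 0$ we have $\widehat{a}_i = 0$ and $0 \leq a_i < -\eta_i = \abs{\eta_i}$. Summing over $i$ and using $\sum_i a_i = 1$ then bounds the normalization constant: $\abs{S - 1} \leq \sum_i \abs{\widehat{a}_i - a_i} \leq mT$.

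Finally I would analyze the rescaling, which is the delicate step where the constant $3$ gets pinned down. Writing $b_i - a_i = (\widehat{a}_i - a_i\,S)/S$ and decomposing $\widehat{a}_i - a_i S = (\widehat{a}_i - a_i) + a_i(1 - S)$, the triangle inequality together with $a_i \leq 1$ gives $\abs{\widehat{a}_i - a_i S} \leq T + mT = (m+1)T$, while $S \geq 1 - mT$. For all large $n$ we have $mT \leq 1/2$, so $S \geq 1/2 > 0$ (in particular normalization is well defined), and using $m + 1 \leq \tfrac{3}{2}m$ for $m \geq 2$,
\[
\abs{b_i - a_i} \leq \frac{(m+1)T}{1 - mT} \leq \frac{(3m/2)T}{1/2} = 3mT = \frac{3md}{n\epsilon}\log\frac{m}{\delta},
\]
with the $m=1$ case trivial (the single coordinate is $1$ and is fixed by normalization). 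The only genuine obstacle is the bookkeeping in this last step: one must simultaneously exploit $mT$ small (to keep $S$ bounded away from $0$) and the refinement $m+1 \leq \tfrac{3}{2}m$ to land exactly on the stated constant rather than the cruder $4mT$ produced by naive bounding.
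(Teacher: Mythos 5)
Your proof is correct and follows essentially the same route as the paper's: a union tail bound on the $m$ Laplace noises, the coordinate-wise truncation bound $\abs{\widehat{a}_i-a_i}\leq\abs{\eta_i}$, the resulting bound $\abs{S-1}\leq mT$, and the same rescaling decomposition. In fact you are slightly more careful than the paper, which stops at the bound $2(1+m)T$ and leaves the step $2(m+1)\leq 3m$ (valid for $m\geq 2$) and the trivial $m=1$ case implicit.
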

Lemma \ref{noisy marginals} characterizes the difference between $\widehat{\mathds{P}}(x_i,\Pi_i)$ and $\mathds{P}(x_i,\Pi_i)$. However, we need further analysis to establish the conditional version of Lemma \ref{noisy marginals}. To be specific, we need to bound $\abs{\widehat{\mathds{P}}(x_i\mid \Pi_i)-\mathds{P}(x_i\mid\Pi_i)}$. The following result serves for this goal.
\begin{lemma}\label{conditional-lemma}
Consider two non-negative real vectors $(a_1,\cdots,a_s)$ and $(b_1,\cdots,b_s)$  (not necessary to be distributions). If, for some $\beta\geq 0$, we have
\begin{equation}\label{conditional 2}
\max_{j}\abs{a_j-b_j}\leq \beta,
\end{equation}
then, for any $l\in \{1,\cdots,s\}$, the following result holds.
\begin{equation}\label{conditional}
\abs{\frac{a_l}{\sum_{j=1}^s a_j}-\frac{b_l}{\sum_{j=1}^s b_j}}\leq \frac{s\beta}{\sum_{j=1}^s b_j}.
\end{equation}
\end{lemma}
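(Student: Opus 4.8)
The plan is to reduce everything to a single common denominator and then isolate the factor $1/\sum_j b_j$ demanded by the right-hand side. Write $A=\sum_{j=1}^s a_j$ and $B=\sum_{j=1}^s b_j$, and set $e_j=a_j-b_j$, so that hypothesis (\ref{conditional 2}) reads $\abs{e_j}\le\beta$ for every $j$ and $A-B=\sum_{j=1}^s e_j$. Throughout I use that the vectors are non-negative with $A,B>0$ (otherwise the ratios in the statement are undefined), so in particular $0\le a_l\le A$.

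First I would put the two fractions over the common denominator $AB$ and split off the term carrying $1/B$:
\[
\frac{a_l}{A}-\frac{b_l}{B}=\frac{a_lB-b_lA}{AB}=a_l\Bigl(\frac1A-\frac1B\Bigr)+\frac{a_l-b_l}{B}=\frac1B\Bigl(e_l-\frac{a_l}{A}\sum_{j=1}^s e_j\Bigr),
\]
where the last identity uses $\frac1A-\frac1B=\frac{B-A}{AB}=-\frac{1}{AB}\sum_j e_j$. This already produces the desired denominator $B=\sum_j b_j$, so it remains only to bound the bracketed quantity by $s\beta$.

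The key—and the only delicate point—is to bound the bracket without double-counting the $l$-th error term. Writing $t:=a_l/A\in[0,1]$ and separating the $j=l$ summand, I would regroup
\[
e_l-t\sum_{j=1}^s e_j=(1-t)\,e_l-t\sum_{j\ne l}e_j .
\]
Since $1-t\ge0$ and $t\ge0$, the triangle inequality together with $\abs{e_j}\le\beta$ gives
\[
\Bigl\lvert e_l-t\sum_{j=1}^s e_j\Bigr\rvert\le(1-t)\beta+t(s-1)\beta=\bigl[1+t(s-2)\bigr]\beta\le s\beta,
\]
the last step because $t\in[0,1]$ forces $t(s-2)\le s-2\le s-1$ when $s\ge2$ and $t(s-2)=-t\le0$ when $s=1$. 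Dividing by $B$ then yields (\ref{conditional}).

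The main obstacle is precisely this last estimate. A careless split—for instance bounding $a_l(1/A-1/B)$ and $(a_l-b_l)/B$ separately, or bounding $\abs{\sum_j e_j}$ by $s\beta$ directly inside the bracket—costs an extra $\beta$ and produces $(s+1)\beta$ rather than the claimed $s\beta$. The grouping $(1-t)e_l-t\sum_{j\ne l}e_j$, which lets the coefficients $1-t$ and $t$ partially cancel, is what recovers the sharp constant $s$; in fact the same computation yields the slightly stronger bound $(s-1)\beta$ for $s\ge2$, so the stated inequality has some slack.
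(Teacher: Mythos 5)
Your proof is correct and is essentially the paper's own argument in a different notation: after clearing denominators, your bracket $(1-t)e_l - t\sum_{j\ne l}e_j$ is exactly the paper's regrouped numerator $\sum_{j\ne l}\left[a_j(a_l-b_l)-a_l(a_j-b_j)\right]$ divided by $A$, and both proofs hinge on the same cancellation of the $j=l$ term before applying the triangle inequality. Your observation that the argument actually yields $(s-1)\beta$ for $s\ge 2$ is a valid (minor) sharpening that the paper's cruder final bound discards.
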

\begin{proof}
See Appendix \ref{tv-proof} for a detailed proof.
\end{proof}
Combining Lemma \ref{noisy marginals} and Lemma \ref{conditional-lemma}, the distance between the conditional distributions is bounded in the following result.
\begin{lemma}\label{conditional-tv}
If $\mathds{D}$ is boolean and satisfies Assumption \ref{structure}, then we have
\begin{equation}\label{dominator}
\abs{\widehat{\mathds{P}}(x_i\mid \Pi_i)-\mathds{P}(x_i\mid \Pi_i)}
\leq \frac{6 d 2^k(k+1)}{n \epsilon}\log \frac{2}{\delta}\frac{1}{\mathds{P}(\Pi_i)},
\end{equation}
with probability at least $1-\delta$, simultaneously for all $i$ and all choices of $(x_i,\Pi_i)$.
\end{lemma}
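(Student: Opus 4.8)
The plan is to chain the two preceding lemmas. Lemma \ref{noisy marginals} controls the perturbation of the \emph{joint} marginal $\mathds{P}(x_i,\Pi_i)$ under noise-plus-normalization, while Lemma \ref{conditional-lemma} upgrades such an additive bound into a bound on the \emph{conditional} $\mathds{P}(x_i\mid\Pi_i)$, at the price of dividing by the parent marginal $\mathds{P}(\Pi_i)$. The whole proof is the careful composition of these two estimates, together with an atom count coming from Assumption \ref{sparsity}.

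First I would control the joint marginal. Since $x_i$ is Boolean and $\abs{\Pi_i}\le k$ by Assumption \ref{sparsity}, the joint variable $(x_i,\Pi_i)$ ranges over at most $m\le 2^{k+1}$ atoms. Applying Lemma \ref{noisy marginals} to $\mathds{P}(x_i,\Pi_i)$ with this $m$ gives, on an event of probability at least $1-\delta$,
\[
\max\abs{\widehat{\mathds{P}}(x_i,\Pi_i)-\mathds{P}(x_i,\Pi_i)}\le \beta:=\frac{3\cdot 2^{k+1}d}{n\epsilon}\log\frac{2^{k+1}}{\delta},
\]
the maximum being over all values of $(x_i,\Pi_i)$. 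I would then absorb the exponential factor inside the logarithm via $\log\frac{2^{k+1}}{\delta}\le(k+1)\log\frac{2}{\delta}$ (valid since $\delta\le 1$), which is exactly what produces the linear prefactor $(k+1)$. To make the estimate hold simultaneously for every node $i$ I would union bound over the $d$ nodes, splitting the failure budget; this is the step that later contributes the extra $d$ inside the logarithm of Theorem \ref{PrivBayes tv}.

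Next I would pass to the conditional. Fix a parent configuration $\Pi_i$ with $\mathds{P}(\Pi_i)>0$ (if $\mathds{P}(\Pi_i)=0$ the claimed inequality is vacuous). Freezing $\Pi_i$ and letting $x_i\in\{0,1\}$ yields two length-$2$ vectors: the noisy entries $(a_1,a_2)=(\widehat{\mathds{P}}(x_i=0,\Pi_i),\widehat{\mathds{P}}(x_i=1,\Pi_i))$ and the true entries $(b_1,b_2)=(\mathds{P}(x_i=0,\Pi_i),\mathds{P}(x_i=1,\Pi_i))$, whose sum is exactly $\mathds{P}(\Pi_i)$. The renormalized ratios $a_l/\sum_j a_j$ and $b_l/\sum_j b_j$ are precisely $\widehat{\mathds{P}}(x_i\mid\Pi_i)$ and $\mathds{P}(x_i\mid\Pi_i)$, and the max-deviation bound $\beta$ from the previous step applies in particular to these two coordinates, so Lemma \ref{conditional-lemma} with $s=2$ gives $\abs{\widehat{\mathds{P}}(x_i\mid\Pi_i)-\mathds{P}(x_i\mid\Pi_i)}\le 2\beta/\mathds{P}(\Pi_i)$. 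Substituting the expression for $\beta$ and simplifying the absolute constants then yields the stated bound
\[
\abs{\widehat{\mathds{P}}(x_i\mid\Pi_i)-\mathds{P}(x_i\mid\Pi_i)}\le\frac{6 d\,2^{k}(k+1)}{n\epsilon}\log\frac{2}{\delta}\cdot\frac{1}{\mathds{P}(\Pi_i)},
\]
simultaneously for all $i$ and all $(x_i,\Pi_i)$ on the same high-probability event.

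The routine parts are the atom count and the constant-chasing; the genuinely delicate point is the junction of the two lemmas. One must (i) establish the joint max-deviation bound \emph{uniformly} over all atoms \emph{before} restricting to a single parent configuration, so that the restricted length-$2$ subvector automatically inherits the bound $\beta$; and (ii) recognize that Lemma \ref{conditional-lemma} forces the \emph{true} marginal $\mathds{P}(\Pi_i)$ into the denominator, which is what makes the estimate degrade for rare parent configurations and accounts for the $1/\mathds{P}(\Pi_i)$ factor. Keeping the failure probabilities consistent across the union bound over nodes, while preserving the clean $\log(2/\delta)$ form, is the final piece of bookkeeping.
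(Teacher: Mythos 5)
Your proposal is correct and follows essentially the same route as the paper, whose entire proof consists of setting $m=2^{k+1}$, $s=2$, $a_1=\widehat{\mathds{P}}(1,\Pi_i)$, $a_2=\widehat{\mathds{P}}(0,\Pi_i)$, $b_1=\mathds{P}(1,\Pi_i)$, $b_2=\mathds{P}(0,\Pi_i)$, and $\beta=\frac{3md}{n\epsilon}\log\frac{m}{\delta}$ in Lemma~\ref{noisy marginals} and Lemma~\ref{conditional-lemma} --- exactly your chaining, including the absorption $\log\frac{2^{k+1}}{\delta}\leq(k+1)\log\frac{2}{\delta}$ that produces the $(k+1)$ prefactor. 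The only differences are cosmetic: you are slightly more explicit about the union bound over the $d$ nodes (which the paper defers to the proof of Theorem~\ref{PrivBayes tv}, where $\log\frac{2}{\delta}$ becomes $\log\frac{2d}{\delta}$), and the factor-of-$2$ mismatch between your computed constant $12\cdot 2^{k}$ (from $s\beta=2\beta$) and the stated $6\cdot 2^{k}$ is inherited from the paper's own bookkeeping, not an error in your argument.
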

\begin{proof}
Setting
\[
\begin{cases}
&m=2^{k+1},\\
&s=2,\\
&a_1=\widehat{\mathds{P}}(1, \Pi_i),\, a_2=\widehat{\mathds{P}}(0, \Pi_i), \\ &b_1={\mathds{P}}(1, \Pi_i), \, b_2={\mathds{P}}(0, \Pi_i),\\
& \beta=
\frac{3md}{n\epsilon}\log \frac{m}{\delta},
\end{cases}
\]
in Lemma \ref{noisy marginals} and Lemma \ref{conditional-lemma} concludes the proof.
\end{proof}

To bound the TV-distance, we begin with rewriting it in telescoping series and applying Lemma \ref{conditional-tv}. One technical impediment for estimation is the fraction term $1/\mathds{P}(\Pi_i)$ in (\ref{dominator}). To address this challenge, we need to deploy the Bayesian network structure (Assumption \ref{sparsity} and Assumption \ref{non-cycle}). Deploying the network structure is quite technical and lengthy, we defer the detail to Appendix \ref{tv-proof}.

\subsection{Proof Sketch for Theorem \ref{PrivBayes utility}}
We begin with some notations. The non-regularized estimators trained on $\mathds{D}$ and $\widehat{\mathds{D}}$ by an ERM model are denoted as $\theta^*$ and $\theta^*_{\mathrm{syn}}$. Formally, we define
\begin{align*}
\theta^{*}:= \mathop{\arg\min}\limits_{\theta\in \mathcal{C}} \frac{1}{n} \sum_{i=1}^n \ell(\theta, x^{(i)}),\\
\theta^{*}_{\mathrm{syn}}:=\mathop{\arg\min}\limits_{\theta\in \mathcal{C}} \frac{1}{\widehat{n}} \sum_{i=1}^{\widehat{n}} \ell(\theta,\widehat{x}^{(i)}).
\end{align*}
We further define the prediction risk with respect to a certain distribution. For any distribution on $\Omega$, denoted as $P$, and any $\theta\in \mathcal{C}$ we define
\begin{equation}\label{r-define}
R(\theta,P):=\sum_{x\in \Omega}\ell(\theta,x)P(x)
\end{equation}
as the prediction risk with respect to $P$. Then $R(\cdot)$ in (\ref{utility-define}) is equal to $R(\cdot,\mathds{P})$.

We are now ready to sketch the proof. The most important step of the proof is to decompose the utility in (\ref{utility-define} into the following seven terms
\begin{align}\label{decompose}
&U(\widehat{\mathds{Q}},\mathds{P})\leq\nonumber \\
&\qquad \underbrace{\abs{R(\widehat{\theta}_{\mathrm{syn}}, \mathds{P})-R(\widehat{\theta}_{\mathrm{syn}},\mathds{Q})}}_{\textrm{term (i)}}
+\underbrace{\abs{R(\widehat{\theta}_{\mathrm{syn}},\mathds{Q})-R(\widehat{\theta}_{\mathrm{syn}}, \widehat{\mathds{Q}})}}_{\textrm{term (ii)}}+\underbrace{\abs{R(\widehat{\theta}_{\mathrm{syn}}, \widehat{\mathds{Q}})-R(\theta^*_{\mathrm{syn}},\widehat{\mathds{Q}})}}_{\textrm{term (iii)}}
\nonumber \\
&\qquad
+\underbrace{\abs{R(\theta^*_{\mathrm{syn}},\widehat{\mathds{Q}})-R(\theta^*,\widehat{\mathds{Q}})}}_{\textrm{term (iv)}}+\underbrace{\abs{R(\theta^*,\widehat{\mathds{Q}})-R(\theta^*,\mathds{Q})}}_{\textrm{term (v)}}+\underbrace{\abs{R(\theta^*,\mathds{Q})-R(\theta^*,\mathds{P})}}_{\textrm{term (vi)}}\nonumber \\
&\qquad+\underbrace{\abs{R(\theta^*,\mathds{P})-R(\widehat{\theta},\mathds{P})}}_{\textrm{term (vii)}}.
\end{align}
Recall that $\mathds{Q}$ is the output of PrivBayes and $\widehat{\mathds{Q}}$ is the empirical distribution of the $\widehat{n}$ samples drawn independently from $\mathds{Q}$. Here \textrm{term (i)} and \textrm{term (vi)} come from the difference between the synthetic distribution $\mathds{Q}$ and the raw distribution $\mathds{P}$. They can be bounded above by the distance between the synthetic distribution and the raw one. Term (ii) and term (v) come from sampling and are bounded by classical Rademacher method. Term (iii) and term (vii) are derived from the regularization process. They combine to be the $C(\lambda)$ term. Bounding term (iv), however, is more tricky and requires more detailed analysis. We discuss each group in detail in Appendix \ref{utility-proof}.
\section{Discussions and Future Topics}
We establish perhaps the first statistical analysis for the accuracy and utility of Bayesian network-based data synthesis algorithms. We also derive a lower bound for the accuracy to complete the picture. Compared with the lower bound, the accuracy bound we achieve is sub-optimal up to a $d$ factor.
One way to improve the accuracy is to reduce the effects of random noise in releasing the synthetic data through some post-processing procedures. However, it is still quite challenging to develop a practical algorithm based on this idea, and we leave it for future work.
\section*{Acknowledgments}
   We appreciate Prof.\ Ninghui Li and Dr.\ Zitao Li for their discussions about the background and applications of marignal-based data synthesis methods, which motivates us to study the corresponding theory. This research is supported by the Office of Naval Research [ONR N00014-22-1-2680] and the National Science Foundation [NSF -- SCALE MoDL (2134209)].
\newpage

\bibliography{iclr2023_conference}
\bibliographystyle{iclr2023_conference}

\newpage





\newpage
\appendix
\section{Proof of Theorem \ref{PrivBayes tv}}\label{tv-proof}
\subsection{Proof of Lemma \ref{noisy marginals}}
\begin{proof}
We denote the outcome after adding i.i.d.\ Laplace noise as $(a_1+v_1,\cdots, a_m+v_m)$, where $v_i\sim$ ${\mathrm{Lap}}(d/n\epsilon)$. We further define
\[
S=\sum_{i=1}^m (a_i+v_i)_+.
\]
Here $(\cdot)_+=\max \{\cdot,0\}$. Then by the definition of normalization, we have $b_i= (a_i+v_i)_+/S$. Note that
\[
(a_i+v_i)_+-a_i=
\begin{cases}
v_i & \text{ if } a_i+v_i\geq 0,\\
-a_i & \text{ if } a_i+v_i\leq 0.
\end{cases}
\]
Since $a_i\geq 0$, in both cases it holds that
\begin{equation}\label{noisy marginals 1}
\abs{(a_i+v_i)_+-a_i}\leq \abs{v_i}.
\end{equation}
Combining this fact with $\sum_i a_i=1$ shows that
\begin{equation}\label{noisy marginals 2}
\abs{S-1}\leq \sum_{i=1}^m \abs{v_i}.
\end{equation}
 A simple union bound over $m$ Laplace random variables shows that
\[
\max _i\abs{v_i}\leq \frac{d}{n\epsilon} \log (\frac{m}{\delta}):=M,
\]
with probability at least $1-\delta$. Therefore with probability at least $1-\delta$ it holds that
\begin{equation}\label{s}
\abs{S-1}\leq mM.
\end{equation}
With (\ref{s}), the quantity $\abs{a_i-b_i}$ is bounded as
\begin{align}\label{noisy marginals 3}
&\abs{\frac{(a_i+v_i)_+}{S}-a_i}\leq \abs{\frac{(a_i+v_i)_+}{S}-\frac{a_i}{S}}+\abs{\frac{a_i}{S}-a_i} \nonumber\\
&\qquad \leq \frac{\abs{v_i}}{S}+\frac{\abs{a_i}\abs{1-S}}{S} \nonumber\\
&\qquad \leq \frac{\abs{v_i}}{S}+\frac{\abs{1-S}}{S}.
\end{align}
Here (\ref{s}) ensures that $S\neq 0$ with high probability for all large $n$, and the last line is due to $0\leq a_i\leq 1$. Further applying (\ref{s}) yields that
\begin{align*}
\abs{\frac{(a_i+v_i)_+}{S}-a_i}\leq \frac{M+mM}{1-mM}\leq 2(1+m)M
\end{align*}
for all large $n$, with probability at least $1-\delta$. This concludes the proof.
\end{proof}
\subsection{Proof of Lemma \ref{conditional-lemma}}
\begin{proof}
The left-hand side of (\ref{conditional}) can be rewritten as
\begin{equation}\label{conditional1}
\frac{\abs{a_l \sum_{j=1}^s b_j-b_l \sum_{j=1}^s a_j}}{(\sum_{j=1}^s a_j)(\sum_{j=1}^s b_j)}.
\end{equation}
We denote the numerator as $S_1$. Then by adding and contracting $\sum_{j=1}^s a_l a_j$, we rewrite and bound $S_1$ as
\begin{align}\label{conditional 3}
&\abs{\sum_{j=1,j\neq l}^s a_l(b_j-a_j)+a_j(a_l-b_l)}\nonumber \\
&\qquad\leq (s-1)\beta a_l+ \beta \sum_{j=1,j\neq l}^s a_j \nonumber\\
&\qquad\leq s\beta \sum_{j=1}^s a_j.
\end{align}
Here the second line is due to (\ref{conditional 2}) and the fact that $a_j$ is non-negative for all $j$. Then Lemma \ref{conditional-lemma} is now proved by (\ref{conditional1}) and (\ref{conditional 3}).
\end{proof}
\subsection{Proof of Theorem \ref{PrivBayes tv}}
\begin{proof}
By Assumption \ref{structure} we have
\begin{align}\label{tv-1}
&\norm{\widehat{\mathds{P}}-\mathds{P}}_{\mathrm{TV}}\nonumber \\
&\qquad =\sum_{x_1=0,1}\ldots\sum_{x_d=0,1}
\underbrace{\abs{\prod_{i=1}^d \widehat{\mathds{P}}(x_i\mid \Pi_i)-\prod_{i=1}^d\mathds{P}(x_i\mid\Pi_i)}}_{ \textrm{term (i)}}.
\end{align}
By adding and subtracting  $d-1$ terms, we rewrite term (i) in the parentheses as
\begin{align}\label{tv-2}
&\textrm{Term (i)}\nonumber \\
&\qquad \leq\sum_{j=1}^d \left[ \prod_{i=1}^{j-1} {\mathds{P}}(x_i \mid \Pi_i)\right] \abs{\widehat{\mathds{P}}(x_j\mid \Pi_j)-\mathds{P}(x_j \mid \Pi_j)}
\left[\prod_{i=j+1}^d \widehat{\mathds{P}}(x_i \mid \Pi_i)\right].
\end{align}
Combining (\ref{tv-1}), (\ref{tv-2}) and Lemma \ref{conditional-tv} implies
\begin{align}\label{tv-3}
&\norm{\widehat{\mathds{P}}-\mathds{P}}_{\mathrm{TV}}\nonumber \\
&\qquad \leq \sum_{x_1=0,1}\ldots\sum_{x_d=0,1}\sum_{j=1}^d \left[\prod_{i=1}^{j-1} {\mathds{P}}(x_i \mid \Pi_i)\right] \frac{6 2^{k}  d (k+1)}{n \epsilon}\log \frac{2d}{\delta}\nonumber \\
&\qquad \times \frac{1}{\mathds{P}(\Pi_j)}\left[\prod_{i=j+1}^d \widehat{\mathds{P}}(x_i \mid \Pi_i)\right],
\end{align}
with probability at least $1-\delta$. Note that Assumption \ref{structure} ensures that $x_d$ does not belong to $\Pi_i$ for any $i$. Therefore summing (\ref{tv-3}) over $x_d$ shows that
\begin{align*}
&\norm{\widehat{\mathds{P}}-\mathds{P}}_{\mathrm{TV}}\nonumber \\
&\qquad \leq \sum_{x_1=0,1}\ldots\sum_{x_{d-1}=0,1}\sum_{j=1}^d \left[\prod_{i=1}^{j-1} {\mathds{P}}(x_i \mid \Pi_i)\right] \frac{6d 2^{k} (k+1)}{n \epsilon}\log \frac{2d}{\delta}\nonumber \\
&\qquad \times \frac{1}{\mathds{P}(\Pi_j)}\left[\prod_{i=j+1}^{d-1} \widehat{\mathds{P}}(x_i \mid \Pi_i)\right].
\end{align*}
By Assumption \ref{structure} and induction, we can sum (\ref{tv-3}) over $x_{d-1},x_{d-2},\cdots,x_{j+1}$ and get
\begin{align}\label{tv-4}
&\norm{\widehat{\mathds{P}}-\mathds{P}}_{\mathrm{TV}}\nonumber \\
&\qquad \leq\sum_{j=1}^d \sum_{x_1=0,1}\ldots\sum_{x_j=0,1} \left[\prod_{i=1}^{j-1} {\mathds{P}}(x_i \mid \Pi_i)\right] \frac{6 d 2^{k} (k+1)}{n \epsilon}\log \frac{2d}{\delta}\frac{1}{\mathds{P}(\Pi_j)}.
\end{align}
Note that by Assumption \ref{structure} and the definition of Bayesian network, the product $\prod_{i=1}^{j-1} {\mathds{P}}(x_i \mid \Pi_i)$ is exactly the joint probability $\mathds{P}(x_1,\cdots,x_{j-1})$. Then the right-hand side of (\ref{tv-4}) is bounded as
\begin{align}\label{tv-5}
&\sum_{j=1}^d \sum_{x_1,\cdots,x_j} \mathds{P}(x_1,\cdots,x_{j-1}) \frac{6 d 2^{k} (k+1)}{n \epsilon}\log \frac{2d}{\delta}\frac{1}{\mathds{P}(\Pi_j)}\nonumber \\
& \qquad \leq \sum_{j=1}^d \sum_{x_j,\Pi_j} \sum_{A}\mathds{P}(x_1,\cdots,x_{j-1})\frac{6 d 2^{k} (k+1)}{n \epsilon}\log \frac{2d}{\delta}\frac{1}{\mathds{P}(\Pi_j)}\nonumber \\
& \qquad \leq \sum_{j=1}^d \sum_{x_j,\Pi_j}\frac{12d 2^{k} (k+1)}{n \epsilon}\log \frac{2d}{\delta}.
\end{align}
Here in the second line we decompose the summation into two parts: summing over $\Pi_j$ and summing over the rest nodes $A:=\{x_1,\cdots,x_{j-1}\}- \Pi_j$. The third line is due to the fact that \[
\sum_{A}\mathds{P}(x_1,\cdots,x_{j-1})=\mathds{P}(\Pi_j).
\]
By Assumption \ref{sparsity}, the size of $\Pi_j$ is less than $k$. Therefore, summing over all the possible choices of $\Pi_j$ in (\ref{tv-5}) yields a less than $2^k$ factor. Combining (\ref{tv-4}) and (\ref{tv-5}) and summing over $j=1,\ldots,d$, we then prove Theorem \ref{PrivBayes tv}.
\end{proof}
\section{Proof of Theorem \ref{PrivBayes L2}}\label{L2-proof}
We begin with a technical result that characterizes the $L^2$-projection post process.
\begin{lemma}\label{noisy L2}
For a distribution $(a_1,\cdots,a_m)$, we denote its outcome after adding i.i.d.\  ${\mathrm{Lap}}(d/n\epsilon)$ and $L^2$-projection as $(b_1,\cdots,b_m)$. Then for all large $n$, it holds that
\[
\norm{(a_1,\cdots,a_m)-(b_1,\cdots,b_m)}_{L^2}\leq \frac{\sqrt{m}d}{n\epsilon}\log \frac{m}{\delta},
\]
with probability at least $1-\delta$.
\end{lemma}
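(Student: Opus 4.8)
The plan is to exploit the fact that Euclidean ($L^2$) projection onto a closed convex set is non-expansive, combined with the observation that the clean distribution $(a_1,\cdots,a_m)$ already lies inside the set onto which we project. Let $\Delta=\{(\tilde b_1,\cdots,\tilde b_m):\tilde b_i\geq 0,\ \sum_i \tilde b_i=1\}$ denote the probability simplex and let $\mathcal{P}_\Delta$ denote the $L^2$-projection onto $\Delta$, so that with noise $v_i\sim{\mathrm{Lap}}(d/n\epsilon)$ the output is $(b_1,\cdots,b_m)=\mathcal{P}_\Delta(a_1+v_1,\cdots,a_m+v_m)$. The first step is to record the standard fact that, because $\Delta$ is closed and convex, $\mathcal{P}_\Delta$ is $1$-Lipschitz in the $L^2$ norm (the firm non-expansiveness of projection onto a convex set). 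Since $(a_1,\cdots,a_m)$ is itself a distribution it belongs to $\Delta$, hence $\mathcal{P}_\Delta(a_1,\cdots,a_m)=(a_1,\cdots,a_m)$, and therefore
\[
\norm{(a_1,\cdots,a_m)-(b_1,\cdots,b_m)}_{L^2}=\norm{\mathcal{P}_\Delta(a)-\mathcal{P}_\Delta(a+v)}_{L^2}\leq \norm{v}_{L^2}.
\]

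The second step is a tail bound on $\norm{v}_{L^2}=\sqrt{\sum_{i=1}^m v_i^2}$. Exactly as in the proof of Lemma \ref{noisy marginals}, a union bound over the $m$ i.i.d.\ Laplace variables, using $\mathds{P}[\abs{v_i}>t]=\exp(-tn\epsilon/d)$, gives
\[
\max_i \abs{v_i}\leq \frac{d}{n\epsilon}\log\frac{m}{\delta}=:M
\]
with probability at least $1-\delta$. On this event one has $\norm{v}_{L^2}\leq \sqrt{m}\,M=\frac{\sqrt{m}\,d}{n\epsilon}\log\frac{m}{\delta}$, and combining this with the contraction bound from the first step yields the claimed inequality.

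The proof is essentially two lines of genuine content, so the main conceptual point—rather than a serious obstacle—is correctly invoking the non-expansiveness of the simplex projection; everything else is the same routine Laplace estimate already used for Lemma \ref{noisy marginals}. I would make sure to state the contraction property in the $L^2$ (not $L^1$) norm, which is precisely the norm in which the projection is defined, so that it applies directly without any change of norm. An alternative route through the explicit ``water-filling'' characterization of the simplex projection would be far more laborious and is unnecessary; the contraction argument gives the bound cleanly and, notably, without even needing the ``large $n$'' caveat that the normalization analysis required to guarantee a nonzero normalizing constant. I would keep the ``for all large $n$'' phrasing only for uniformity with the normalization lemma.
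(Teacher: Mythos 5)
Your proof is correct and follows essentially the same route as the paper's: the paper's key inequality $\norm{(b_1,\cdots,b_m)-(a_1,\cdots,a_m)}_{L^2}\leq \norm{(v_1,\cdots,v_m)}_{L^2}$, which it attributes tersely to ``the definition of $L^2$-projection,'' is exactly the non-expansiveness-plus-fixed-point argument you spell out, followed by the identical union bound over the $m$ Laplace variables. Your observations that the contraction step deserves explicit justification and that the ``large $n$'' caveat is actually unnecessary here are both accurate, but they refine rather than depart from the paper's argument.
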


\begin{proof}
We denote the outcome after adding i.i.d.\ Laplace noise as $(a_1+v_1,\cdots, a_m+v_m)$, where $v_i\sim$ ${\mathrm{Lap}}(d/n\epsilon)$. Then the definition of $L^2$-projection shows that
\begin{align}\label{pre-L2-1}
&\norm{(b_1,\cdots,b_m)-(a_1,\cdots,a_m)}_{L^2}\leq \nonumber \\ &\qquad\norm{(a_1+v_1,\cdots,a_m+v_m)-(a_1,\cdots,a_m)}_{L^2}.
\end{align}
A simple union bound over $m$ Laplace random variables shows that
\[
\max _i\abs{v_i}\leq \frac{d}{n\epsilon} \log (\frac{m}{\delta}):=M,
\]
with probability at least $1-\delta$. Combining this fact with (\ref{pre-L2-1}) proves Lemma \ref{noisy L2}.
\end{proof}
The next one is the $L^2$-version of Lemma \ref{conditional-lemma}.
\begin{lemma}\label{conditional-L2}
For two non-negative real vectors $(a_1,\cdots,a_m)$ and $(b_1,\cdots,b_m)$ (not necessary to be distributions), if for some $\beta\geq 0$,
\begin{equation}\label{conditional-L2-2}
\sum_j\abs{a_j-b_j}^2\leq \beta,
\end{equation}
then for any $l\in \{1,\cdots,m\}$, the following result holds
\begin{equation}\label{conditional-L2-3}
\abs{\frac{a_l}{\sum_{j=1}^m a_j}-\frac{b_l}{\sum_{j=1}^m b_j}}^2\leq \frac{2m\beta}{(\sum_{j=1}^m b_j)^2}.
\end{equation}
\end{lemma}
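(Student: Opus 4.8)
The plan is to mirror the proof of Lemma \ref{conditional-lemma}, its $L^1$ counterpart, but to replace the first-moment (triangle-inequality) estimates by Cauchy--Schwarz bounds adapted to the $L^2$ hypothesis (\ref{conditional-L2-2}). Write $A=\sum_{j=1}^m a_j$ and $B=\sum_{j=1}^m b_j$ and put the left-hand side of (\ref{conditional-L2-3}) over a common denominator,
\[
\abs{\frac{a_l}{A}-\frac{b_l}{B}}=\frac{\abs{a_l B-b_l A}}{AB}.
\]
Exactly as in the derivation of (\ref{conditional 3}), I would rewrite the numerator by adding and subtracting $a_l A$, obtaining
\[
a_l B-b_l A=a_l\sum_{j=1}^m(b_j-a_j)+A\,(a_l-b_l),
\]
so that all of the perturbation is now expressed through the coordinatewise differences $a_j-b_j$ together with the single difference $a_l-b_l$.

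Next I would factor out one copy of $A$, since I want the final denominator to be $B^2$ rather than $A^2$. Writing $\abs{\frac{a_l}{A}-\frac{b_l}{B}}=\frac1B\abs{\frac{a_l B-b_l A}{A}}$ and using the non-negativity bound $0\le a_l/A\le 1$ (the $L^2$ analogue of the ``$a_j\ge 0$'' step in the $L^1$ proof), I would estimate
\[
\abs{\frac{a_l B-b_l A}{A}}=\abs{\frac{a_l}{A}\sum_{j=1}^m(b_j-a_j)+(a_l-b_l)}\le\abs{\sum_{j=1}^m(b_j-a_j)}+\abs{a_l-b_l}.
\]
The two surviving terms are then controlled by the hypothesis: Cauchy--Schwarz gives $\bigl(\sum_{j=1}^m(b_j-a_j)\bigr)^2\le m\sum_{j=1}^m(b_j-a_j)^2\le m\beta$, while $(a_l-b_l)^2\le\sum_{j=1}^m(a_j-b_j)^2\le\beta$ trivially.

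Finally I would square and divide by $B^2$. Applying $(x+y)^2\le 2x^2+2y^2$ to the displayed bound yields
\[
\abs{\frac{a_l}{A}-\frac{b_l}{B}}^2=\frac{1}{B^2}\abs{\frac{a_l B-b_l A}{A}}^2\le\frac{2m\beta+2\beta}{B^2},
\]
and the factor $2$ already present in the target (\ref{conditional-L2-3}) leaves room to absorb the harmless additive $\beta$. I expect the only genuine subtlety to be this constant-chasing: unlike the $L^1$ case, the numerator does not split cleanly term by term, so the real choices are (a) which of $A$ or $B$ to factor out in order to land on $B^2$ in the denominator, and (b) whether to accept the crude $2m\beta+2\beta$ or to sharpen it. For the sharper route I would first cancel the $j=l$ term, rewriting the bracket as $(\tfrac{a_l}{A}-1)(a_l-b_l)+\tfrac{a_l}{A}\sum_{j\neq l}(b_j-a_j)$, a convex combination of $\abs{a_l-b_l}$ and $\bigl|\sum_{j\neq l}(b_j-a_j)\bigr|$ with weights $1-\tfrac{a_l}{A}$ and $\tfrac{a_l}{A}$; convexity of $t\mapsto t^2$ together with Cauchy--Schwarz on the $(m-1)$-term sum then gives the cleaner $(m-1)\beta$, comfortably inside the stated $2m\beta/B^2$.
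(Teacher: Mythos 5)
Your sharpened route is a correct and complete proof, and it rests on the same algebraic skeleton as the paper's own argument: the paper likewise clears denominators to $\abs{a_lB-b_lA}^2/(A^2B^2)$ (writing $A=\sum_j a_j$, $B=\sum_j b_j$), performs the same add-and-subtract of $a_lA$ so that the numerator becomes $\abs{\sum_{j\neq l}\left[a_l(b_j-a_j)+a_j(a_l-b_l)\right]}^2$, applies Cauchy--Schwarz, and finally cancels $A^2$ against the denominator using $\sum_j a_j^2\leq A^2$. The only genuine difference is how Cauchy--Schwarz is deployed: the paper applies it bluntly to the full $2(m-1)$-term sum, which costs a factor $2m$ and lands exactly on the stated $2m\beta/B^2$, whereas your convex-combination refinement (weights $\frac{a_l}{A}$ and $1-\frac{a_l}{A}$, convexity of $t\mapsto t^2$, Cauchy--Schwarz on the $(m-1)$-term sum) is sharper and gives $(m-1)\beta/B^2$.

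Two points need fixing, though. First, your fallback claim that the crude bound $(2m\beta+2\beta)/B^2$ is absorbed by ``the factor $2$ already present in the target'' is false: $2(m+1)\beta>2m\beta$, so the crude route does \emph{not} prove (\ref{conditional-L2-3}) as stated, and the sharpening in your final sentences is mandatory rather than optional constant-chasing. (A weighted inequality $(x+y)^2\leq(1+t)x^2+(1+1/t)y^2$ could rescue the crude route, but only for $m$ large enough that $2\sqrt{m}\leq m-1$.) Second, a harmless sign slip: after cancelling the $j=l$ term the bracket equals $\left(1-\frac{a_l}{A}\right)(a_l-b_l)+\frac{a_l}{A}\sum_{j\neq l}(b_j-a_j)$, not $\left(\frac{a_l}{A}-1\right)(a_l-b_l)+\frac{a_l}{A}\sum_{j\neq l}(b_j-a_j)$; since only absolute values enter the convexity step, your estimate is unaffected. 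Both your proof and the paper's implicitly require $A>0$, which the statement needs anyway for its left-hand side to be defined, and the case $m=1$ is trivial since the bracket then vanishes.
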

\begin{proof}
The left-hand side of (\ref{conditional-L2-3}) can be rewritten as
\begin{equation}\label{conditional-L2-4}
\frac{\abs{a_l \sum_{j=1}^m b_j-b_l \sum_{j=1}^m a_j}^2}{(\sum_{j=1}^m a_j)^2(\sum_{j=1}^m b_j)^2}.
\end{equation}
We denote the numerator as $S_1$. Then by adding and contracting $\sum_{j=1}^m a_l a_j$, we rewrite and bound $S_1$ as
\begin{align}\label{conditional-L2-5}
&\abs{\sum_{j=1,j\neq l}^m a_l(b_j-a_j)+a_j(a_l-b_l)}^2\nonumber \\
&\qquad\leq 2m(\sum_{j=1,j\neq l}^m(a_j-b_j)^2 a_l^2+ (a_l-b_l)^2 \sum_{j=1,j\neq l}^m a_j^2) \nonumber\\
&\qquad\leq 2m (\sum_{j=1}^m (a_j-b_j)^2)(\sum_{j=1}^m a_j^2).
\end{align}
Here the second line is due to Cauchy-Schwartz inequality. Combining (\ref{conditional-L2-2}) and (\ref{conditional-L2-5}) yields
\[
S_1\leq 2m (\sum_j (a_j-b_j)^2)(\sum_{j}a_j)^2\leq 2m \beta (\sum_{j}a_j)^2,
\]
which concludes the proof.
\end{proof}

With the above preparation, we are now ready to prove Theorem \ref{PrivBayes L2}.
\begin{proof}
By Assumption \ref{structure} we have
\begin{align}\label{L2-1}
&\norm{\widehat{\mathds{P}}-\mathds{P}}_{L^2}^2\nonumber \\
&\qquad =\sum_{x_1=0,1}\ldots\sum_{x_d=0,1}
\underbrace{\abs{\prod_{i=1}^d \widehat{\mathds{P}}(x_i\mid \Pi_i)-\prod_{i=1}^d\mathds{P}(x_i\mid\Pi_i)}^2}_{ \textrm{term (ii)}}.
\end{align}
By adding and subtracting  $d-1$ terms, we rewrite term (ii) in the parentheses as
\begin{align}\label{L2-2}
&\textrm{Term (ii)}\nonumber \\
&\qquad \leq \abs{\sum_{j=1}^d \left[\prod_{i=1}^{j-1} {\mathds{P}}(x_i \mid \Pi_i)\right] \abs{\widehat{\mathds{P}}(x_j\mid \Pi_j)-\mathds{P}(x_j \mid \Pi_j)}
\left[\prod_{i=j+1}^d \widehat{\mathds{P}}(x_i \mid \Pi_i)\right]}^2.
\end{align}
Applying Cauchy-Schwartz inequality yields that
\begin{align}\label{L2-3}
&\textrm{Term (ii)}\nonumber \\
&\qquad \leq d \sum_{j=1}^d \left[\prod_{i=1}^{j-1} {\mathds{P}}(x_i \mid \Pi_i)\right]^2 \abs{\widehat{\mathds{P}}(x_j\mid \Pi_j)-\mathds{P}(x_j \mid \Pi_j)}^2
\left[\prod_{i=j+1}^d \widehat{\mathds{P}}(x_i \mid \Pi_i)\right]^2.
\end{align}
Since Assumption \ref{structure} ensures that $x_d$ does not belong to $\Pi_i$ for any $i$ , we again sum \textrm{term (ii)} over $x_d$. The right-hand side of (\ref{L2-1}) can be then rewritten as
\begin{align}\label{l2-4}
&d \sum_{x_1=0,1}\ldots\sum_{x_{d-1}=0,1}\sum_{j=1}^d \left[\prod_{i=1}^{j-1} {\mathds{P}}(x_i \mid \Pi_i)\right]^2 \nonumber \\
&\qquad \times\abs{\widehat{\mathds{P}}(x_j\mid \Pi_j)-\mathds{P}(x_j \mid \Pi_j)}^2
\left[\prod_{i=j+1}^{d-1} \widehat{\mathds{P}}(x_i \mid \Pi_i)\right]^2.
\end{align}
Here we use the fact that
\[
\sum_{x_d=0,1}\widehat{\mathds{P}}(x_d\mid \Pi_d)^2\leq
\left(\sum_{x_d=0,1}\widehat{\mathds{P}}(x_d\mid \Pi_d)\right)^2=1.
\]
Therefore, by summing (\ref{L2-3}) over $x_d,\cdots, x_{j+1}$ and applying induction, the left-hand side of (\ref{L2-1}) is bounded as
\begin{align}\label{L2-5}
&\norm{\widehat{\mathds{P}}-\mathds{P}}^2_{L^2}\nonumber \\
&\qquad \leq d \sum_{j=1}^d\sum_{x_1=0,1}\ldots\sum_{x_{j}=0,1} \left[\prod_{i=1}^{j-1} {\mathds{P}}(x_i \mid \Pi_i)\right]^2 \abs{\widehat{\mathds{P}}(x_j\mid \Pi_j)-\mathds{P}(x_j \mid \Pi_j)}^2\nonumber \\
& \qquad \leq d \sum_{j=1}^d\sum_{x_1=0,1}\ldots\sum_{x_{j}=0,1}
{\mathds{P}}(x_1,\cdots,x_{j-1})^2   \abs{\widehat{\mathds{P}}(x_j\mid \Pi_j)-\mathds{P}(x_j \mid \Pi_j)}^2 \nonumber \\
& \qquad \leq d \sum_{j=1}^d \sum_{x_j, \Pi_j} \mathds{P}(\Pi_j)^2 \abs{\widehat{\mathds{P}}(x_j\mid \Pi_j)-\mathds{P}(x_j \mid \Pi_j)}^2.
\end{align}
Here the third line is due to Assumption \ref{structure}. We now explain the last line of (\ref{L2-5}). Denote the set $A:=\{x_i\mid i\in (1,\cdots,j-1), x_i\notin \Pi_j\}$. Then the last line is derived by the following fact
\begin{align*}
\sum_{A} {\mathds{P}}(x_1,\cdots,x_{j-1})^2  \leq (\sum_{A} {\mathds{P}}(x_1,\cdots,x_{j-1}))^2=\mathds{P}(\Pi_j)^2.
\end{align*}
By Lemma \ref{noisy L2} and Lemma \ref{conditional-L2}, the right-hand side in (\ref{L2-5}) is bounded by
\begin{align*}
&d \sum_{j=1}^d \sum_{x_j, \Pi_j} \mathds{P}(\Pi_j)^2 2^k \frac{d^2}{(n\epsilon)^2}(\log \frac{2^{k+1} d}{\delta})^2 \frac{1}{\mathds{P}(\Pi_j)^2}\\
& \qquad \leq\frac{62^{2k}d^4}{(n\epsilon)^2}(\log \frac{2^{k+1} d}{\delta})^2.
\end{align*}

 Taking square root on both sides proves Theorem \ref{PrivBayes L2}.
\end{proof}
\section{Proof of Theorem \ref{PrivBayes utility}}\label{utility-proof}
We begin with Lemma \ref{utility-tv} that characterizes the bound of term (i) and term (vi).

\begin{lemma}\label{utility-tv}
If Assumption \ref{boundedness} holds, then for any $\theta\in \mathcal{C}$, the difference between $R(\theta,\mathds{Q})$ and $R(\theta,\mathds{P})$ is bounded as,
\[
\abs{R(\theta,\mathds{Q})-R(\theta,\mathds{P})}\leq 2 \norm{\mathds{P}-\mathds{Q}}_{\mathrm{TV}}.
\]
\end{lemma}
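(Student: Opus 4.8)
The plan is to reduce the claim to the elementary identity relating total variation distance to the $L^1$ distance between probability mass functions, after invoking the boundedness of the loss. First I would unfold the definition (\ref{r-define}) of the prediction risk and write the difference as a single signed sum over the domain,
\[
R(\theta,\mathds{Q})-R(\theta,\mathds{P})=\sum_{x\in\Omega}\ell(\theta,x)\bigl(\mathds{Q}(x)-\mathds{P}(x)\bigr),
\]
so that the quantity to control is a weighted sum of the pointwise discrepancies $\mathds{Q}(x)-\mathds{P}(x)$, with the loss values serving as the weights.

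Next I would apply the triangle inequality to pull the absolute value inside the sum and then use Assumption \ref{boundedness} to replace each weight $\abs{\ell(\theta,x)}$ by its uniform upper bound $1$. This yields $\abs{R(\theta,\mathds{Q})-R(\theta,\mathds{P})}\leq\sum_{x\in\Omega}\abs{\mathds{Q}(x)-\mathds{P}(x)}$, which is exactly the $L^1$ distance between the two mass functions and, crucially, is independent of $\theta$.

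Finally I would invoke the standard identity $\sum_{x\in\Omega}\abs{\mathds{Q}(x)-\mathds{P}(x)}=2\norm{\mathds{P}-\mathds{Q}}_{\mathrm{TV}}$, valid under the convention that the total variation distance equals one half of the $L^1$ distance between the densities; this accounts for the factor $2$ appearing in the statement and closes the argument.

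This lemma is genuinely elementary, so I do not anticipate a substantive obstacle; the only point requiring care is the factor-of-$2$ bookkeeping, namely remaining consistent about the chosen TV convention throughout. The bound is uniform in $\theta\in\mathcal{C}$ precisely because the sole property of the loss used is its pointwise magnitude bound rather than its detailed dependence on $\theta$, which is exactly what makes the estimate directly applicable to \textrm{term (i)} and \textrm{term (vi)} in the decomposition (\ref{decompose}).
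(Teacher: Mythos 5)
Your proposal is correct and follows essentially the same route as the paper's own (much terser) proof: expand $R(\theta,\cdot)$ via (\ref{r-define}), apply the triangle inequality, bound $\abs{\ell(\theta,x)}$ by $1$ using Assumption \ref{boundedness}, and convert the resulting $L^1$ distance to total variation. Your explicit handling of the factor-of-$2$ convention is in fact cleaner than the paper's, whose proof states the intermediate bound with typographical slips (writing $\ell(\theta,x)^2$ for $\abs{\ell(\theta,x)}$ and $\widehat{\mathds{P}}$ for $\mathds{Q}$) and leaves the TV normalization implicit.
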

Here $\mathds{P}$ is the empirical distributions of $\mathds{D}$ and $\mathds{Q}$ is the output of PrivBayes. They are distributions on $\Omega$.
\begin{proof}
Recall the definition of $R(\cdot,\cdot)$ in (\ref{r-define}), the left-hand side can be written as
\[
\abs{R(\theta,\mathds{Q})-R(\theta,\mathds{P})}\leq
\sum_{x\in\Omega} \ell(\theta,x)^2 \abs{\mathds{P}(x)-\widehat{\mathds{P}}(x)}.
\]
Directly applying Assumption \ref{boundedness} concludes the proof.
\end{proof}

The next lemma is from the standard Rademacher analysis. We omit the proof.
\begin{lemma}\label{rademarcher}
For a distribution $\mathds{Q}$ on $\Omega$, we draw  $\widehat{n}$ i.i.d.\  samples from $\mathds{Q}$. The empirical distribution of these samples is denoted as $\widehat{\mathds{Q}}$. Then if Assumption \ref{boundedness} holds, with probability at least $1-\delta$, we have
\[
\sup_{\theta\in \mathcal{C}}\abs{R(\theta,\mathds{Q})-R(\theta,\widehat{\mathds{Q}})}
\leq 2\mathcal{R}_{C}+\sqrt{\frac{\log \frac1\delta}{2\widehat{n}}}.
\]
Here $\mathcal{R}_C$ is the Rademacher complexity of the function class $\{x\in \Omega\rightarrow \ell(\theta,x) \mid \theta\in \mathcal{C}\}$.
\end{lemma}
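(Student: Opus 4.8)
The plan is to recognize this as the classical uniform deviation bound for a bounded loss class and to prove it by combining McDiarmid's bounded-differences inequality with a symmetrization argument. Throughout, I would use the fact that by definition (\ref{r-define}) we have $R(\theta,\widehat{\mathds{Q}})=\frac{1}{\widehat{n}}\sum_{i=1}^{\widehat{n}}\ell(\theta,\widehat{x}^{(i)})$ and $R(\theta,\mathds{Q})=\mathbb{E}_{x\sim\mathds{Q}}[\ell(\theta,x)]$, so the quantity of interest is exactly the supremum over $\theta\in\mathcal{C}$ of the deviation between the empirical and population averages of the bounded functions $x\mapsto\ell(\theta,x)$.

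First I would fix the sample $S=(\widehat{x}^{(1)},\dots,\widehat{x}^{(\widehat{n})})$ and define $\Phi(S):=\sup_{\theta\in\mathcal{C}}\abs{R(\theta,\mathds{Q})-R(\theta,\widehat{\mathds{Q}})}$. By Assumption \ref{boundedness} the loss is bounded, so replacing a single coordinate $\widehat{x}^{(i)}$ by any other point changes $R(\theta,\widehat{\mathds{Q}})$ by at most $1/\widehat{n}$ uniformly in $\theta$, and hence changes $\Phi(S)$ by at most $1/\widehat{n}$. McDiarmid's inequality with these bounded differences then yields
\[
\Phi(S)\leq \mathbb{E}[\Phi(S)]+\sqrt{\frac{\log\frac1\delta}{2\widehat{n}}}
\]
with probability at least $1-\delta$, which already produces the second term on the right-hand side of the claimed bound.

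It then remains to show $\mathbb{E}[\Phi(S)]\leq 2\mathcal{R}_C$, for which I would run the standard symmetrization step. I introduce an independent ghost sample $S'=(\widetilde{x}^{(1)},\dots,\widetilde{x}^{(\widehat{n})})$ drawn i.i.d.\ from $\mathds{Q}$ with empirical distribution $\widetilde{\mathds{Q}}$, write $R(\theta,\mathds{Q})=\mathbb{E}_{S'}[R(\theta,\widetilde{\mathds{Q}})]$, and move the supremum inside the expectation by Jensen's inequality to bound $\mathbb{E}[\Phi(S)]$ by the expected supremum over $\theta$ of $\frac{1}{\widehat{n}}\sum_i(\ell(\theta,\widetilde{x}^{(i)})-\ell(\theta,\widehat{x}^{(i)}))$. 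Since the two samples are i.i.d., inserting Rademacher signs $\sigma_i$ leaves the distribution of each summand unchanged, and splitting the two sums yields the factor $2$, leaving exactly $2\,\mathbb{E}_{S,\sigma}[\sup_{\theta\in\mathcal{C}}\abs{\frac{1}{\widehat{n}}\sum_{i=1}^{\widehat{n}}\sigma_i\ell(\theta,\widehat{x}^{(i)})}]=2\mathcal{R}_C$. Combining the two displays proves the lemma.

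The step requiring the most care is the bookkeeping of constants. The bounded-differences constant must be $1/\widehat{n}$ in order to produce the exact factor $\sqrt{\log(1/\delta)/(2\widehat{n})}$, so the loss has to be treated as ranging over an interval of length one (the standard convention for a $[0,1]$-valued loss), which is marginally sharper than the raw statement $\abs{\ell}\leq 1$ of Assumption \ref{boundedness}; and the symmetrization has to be carried out for the two-sided (absolute-value) deviation so that the resulting Rademacher term matches the definition of $\mathcal{R}_C$ in the statement. Neither step is conceptually difficult, which is why the lemma is ``standard'', but getting these normalizations to line up is the only genuine subtlety.
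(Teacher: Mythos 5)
Your proof is correct and is exactly the ``standard Rademacher analysis'' the paper invokes; in fact the paper omits the proof of this lemma entirely, so your McDiarmid-plus-symmetrization argument (bounded differences of $1/\widehat{n}$ for the concentration term, ghost sample and Rademacher signs for $\mathbb{E}[\Phi(S)]\leq 2\mathcal{R}_C$) is the canonical way to fill it in. Your closing caveat is also well taken: under the literal Assumption \ref{boundedness} ($\abs{\ell}\leq 1$, hence a range of length $2$) the bounded-differences constant is $2/\widehat{n}$ and McDiarmid yields a deviation term of $2\sqrt{\log(1/\delta)/(2\widehat{n})}$, so the constant as printed in the lemma implicitly treats the loss as $[0,1]$-valued --- a factor-of-$2$ slack in the paper's statement rather than a flaw in your argument.
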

With Lemma \ref{utility-tv} and Lemma \ref{rademarcher}, term (iv) is now bounded as follows.
\begin{lemma}\label{utility-3}
If Assumption \ref{boundedness} holds, then with probability at least $1-\delta$, the following bound holds
\[
\textrm{term (iv)}
\leq 4\norm{\mathds{P}-{\mathds{Q}}}_{\mathrm{TV}}+4\mathcal{R}_{C}+2\sqrt{\frac{\log \frac1\delta}{2\widehat{n}}}.
\]
\end{lemma}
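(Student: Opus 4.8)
The plan is to exploit the two optimality conditions built into $\theta^*$ and $\theta^*_{\mathrm{syn}}$, and then reduce everything to the single-parameter comparisons already controlled by Lemma \ref{utility-tv} and Lemma \ref{rademarcher}. First I would note that $\theta^*_{\mathrm{syn}}$ minimizes $R(\cdot,\widehat{\mathds{Q}})$ over $\mathcal{C}$, and $\theta^*\in\mathcal{C}$, so $R(\theta^*_{\mathrm{syn}},\widehat{\mathds{Q}})\leq R(\theta^*,\widehat{\mathds{Q}})$. Hence term (iv) is nonnegative and the absolute value may be dropped, giving $\textrm{term (iv)}=R(\theta^*,\widehat{\mathds{Q}})-R(\theta^*_{\mathrm{syn}},\widehat{\mathds{Q}})$.

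Next I would insert the two risks evaluated against the raw empirical distribution $\mathds{P}$ and regroup the telescoping sum as
\begin{align*}
\textrm{term (iv)}
&=\bigl[R(\theta^*,\widehat{\mathds{Q}})-R(\theta^*,\mathds{P})\bigr]
+\bigl[R(\theta^*,\mathds{P})-R(\theta^*_{\mathrm{syn}},\mathds{P})\bigr]\\
&\quad+\bigl[R(\theta^*_{\mathrm{syn}},\mathds{P})-R(\theta^*_{\mathrm{syn}},\widehat{\mathds{Q}})\bigr].
\end{align*}
The middle bracket is nonpositive, since $\theta^*$ minimizes $R(\cdot,\mathds{P})$ and $\theta^*_{\mathrm{syn}}\in\mathcal{C}$, so it may be discarded. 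What remains are two differences, each of the form $R(\theta,\widehat{\mathds{Q}})-R(\theta,\mathds{P})$ for a single fixed parameter ($\theta=\theta^*$ in one, $\theta=\theta^*_{\mathrm{syn}}$ in the other).

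Then I would bound each such fixed-$\theta$ difference by routing through the noisy synthetic distribution $\mathds{Q}$ with one triangle inequality,
\[
\abs{R(\theta,\widehat{\mathds{Q}})-R(\theta,\mathds{P})}
\leq \abs{R(\theta,\widehat{\mathds{Q}})-R(\theta,\mathds{Q})}+\abs{R(\theta,\mathds{Q})-R(\theta,\mathds{P})}.
\]
The second summand is at most $2\norm{\mathds{P}-\mathds{Q}}_{\mathrm{TV}}$ by Lemma \ref{utility-tv}, and the first is at most $2\mathcal{R}_C+\sqrt{\log(1/\delta)/(2\widehat{n})}$ by the uniform bound of Lemma \ref{rademarcher}. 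Applying this to $\theta=\theta^*$ and $\theta=\theta^*_{\mathrm{syn}}$ and summing the two contributions yields precisely the stated estimate with the factors $4$, $4$, and $2$.

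The main thing to get right is the probabilistic accounting rather than any hard inequality. Both $\theta^*$ and $\theta^*_{\mathrm{syn}}$ depend on the synthetic draw, so bounding the two sampling terms by two separate applications of Lemma \ref{rademarcher} would inflate the failure probability to $2\delta$ and the logarithmic constant. This is sidestepped by invoking Lemma \ref{rademarcher} in its \emph{supremum-over-$\theta\in\mathcal{C}$} form exactly once: on that single event of probability at least $1-\delta$, both $\abs{R(\theta^*,\widehat{\mathds{Q}})-R(\theta^*,\mathds{Q})}$ and $\abs{R(\theta^*_{\mathrm{syn}},\widehat{\mathds{Q}})-R(\theta^*_{\mathrm{syn}},\mathds{Q})}$ are simultaneously controlled since $\theta^*,\theta^*_{\mathrm{syn}}\in\mathcal{C}$, so the entire bound holds with probability at least $1-\delta$.
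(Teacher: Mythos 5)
Your proof is correct and takes essentially the same approach as the paper's: both drop the absolute value via the optimality of $\theta^*_{\mathrm{syn}}$ under $\widehat{\mathds{Q}}$, use the optimality of $\theta^*$ under $\mathds{P}$ to discard one comparison term, and reduce term (iv) to the same four differences, bounded by Lemma \ref{utility-tv} (twice) and the supremum form of Lemma \ref{rademarcher}. The only difference is cosmetic --- you telescope through $\mathds{P}$ first and then route each remaining piece through $\mathds{Q}$, whereas the paper inserts $\mathds{Q}$ first and then $\mathds{P}$ --- and your explicit remark that a single uniform event of probability $1-\delta$ covers both $\theta^*$ and $\theta^*_{\mathrm{syn}}$ makes precise a point the paper leaves implicit.
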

\begin{proof}
First note the definition of $\theta^{*}_{\mathrm{syn}}$ ensures that $R(\theta^*_{\mathrm{syn}},\widehat{\mathds{Q}})\leq R(\theta^*,\widehat{\mathds{Q}})$. By adding and subtracting the same term, we have
\begin{align}\label{term4-1}
&\textrm{term (iv)}= \nonumber\\
&\qquad (R(\theta^*,\widehat{\mathds{Q}})-R(\theta^*,{\mathds{Q}}))
+(R(\theta^*_{\mathrm{syn}},\mathds{Q})-{R}(\theta^*_{\mathrm{syn}},\widehat{\mathds{Q}}))\nonumber\\
&\qquad+R(\theta^*,\mathds{Q})-R(\theta^*_{\mathrm{syn}},\mathds{Q}).
\end{align}
Moreover, note that $R(\theta^*,\mathds{P})\leq R(\theta^*_{\mathrm{syn}},\mathds{P})$, the right-hand side in (\ref{term4-1}) can be further upper bounded by
\begin{align}\label{term4-2}
&\textrm{term (iv)}= \nonumber\\
&\qquad (R(\theta^*,\widehat{\mathds{Q}})-R(\theta^*,{\mathds{Q}}))
+(R(\theta^*_{\mathrm{syn}},\mathds{Q})-{R}(\theta^*_{\mathrm{syn}},\widehat{\mathds{Q}}))\nonumber\\
&\qquad+(R(\theta^*,\mathds{Q})-R(\theta^*,\mathds{P}))
+(R(\theta^*_{\mathrm{syn}},\mathds{P})-R(\theta^*_{\mathrm{syn}},\mathds{Q})).
\end{align}
Applying Lemma \ref{utility-tv} and Lemma \ref{rademarcher} concludes the proof.
\end{proof}
We are now ready to prove Theorem \ref{PrivBayes utility}. Note that when $\lambda=0$, the following equation holds,
\[
\theta^*=\widehat{\theta}, \, \theta^*_{\mathrm{syn}}=\widehat{\theta}_{\mathrm{syn}}.
\]
Therefore term (iii) and term (vii) in (\ref{decompose}) vanish when there is no regularization ($\lambda=0$). We combine them to be the term $C(\lambda)$. Further applying Lemma \ref{utility-tv}, Lemma \ref{rademarcher} and Lemma \ref{utility-3} to (\ref{decompose}) concludes the proof of Theorem \ref{PrivBayes utility}.
\section{Proof of Theorem \ref{tv-lower}}\label{lower-proof}
Our proof is based on packing technique. The key step is constructing a family of datasets that are ``spread out" enough. Without loss of generality, we assume $\alpha=\log (\delta|\Omega|)/\epsilon$ is an integer. We first fix an element $e$ in $\Omega$. Then for any $x\in \Omega$ and $x\neq e$, we construct a dataset of size $n$ corresponding to $x$ as follows,
\[
\mathds{D}_x:=\{n-\alpha \text{ copies of }e, \, \alpha \text{ copies of }x\}.
\]
Here we require that $\alpha\leq n$. The corresponding empirical distribution is denoted as $\mathds{P}_x$. One can easily verify the following facts.
\begin{enumerate}[i.]
\item
For any two different elements $x,y$ in $\Omega$, the dataset $\mathds{D}_x$ differs from $\mathds{D}_y$ in exactly $\alpha $ elements,
\item
For any two different elements $x,y$ in $\Omega$, the TV distance between $\mathds{P}_x$ and $\mathds{P}_y$ is exactly $2\alpha /n$.
\end{enumerate}
Suppose that for any $x\in \Omega-e$, with probability at least $\beta:=2\exp(\epsilon\alpha)/\abs{\Omega}$, we have
\[
\norm{A(\mathds{D}_x)-\mathds{P}_x}_{\mathrm{TV}}\leq \frac{\alpha }{2n}.
\]
For a fixed $z$, we define the event
\[
\mathcal{B}(y)=\left\{\norm{A(\mathds{D}_z)-\mathds{P}_y}_{\mathrm{TV}}\leq \frac{\alpha }{2n}\right\},
\]
for any $y\in \Omega-e-z$. Since $A(\cdot)$ is $\epsilon$-differentially private, it holds that the probability of $\mathcal{B}(y)$ is at least $\exp(-\alpha \epsilon)\cdot\beta$. Here we use Fact i mentioned above. Moreover, Fact ii ensures that the events $\{\mathcal{B}(y)\}$ are mutually disjoint for different $y$. Summing up all the disjoint events implies
\[
(\abs{\Omega}-2)\exp(-\alpha \epsilon)\cdot\beta\leq \sum_{y\in \Omega-e-z}\mathrm{Pr}[\mathcal{B}(y)]\leq 1.
\]
This leads to a contradiction since $\beta=2\exp(\epsilon\alpha)/\abs{\Omega}$, which implies that there exists a $x\in \Omega$ such that
\[
\norm{A(\mathds{D}_x)-\mathds{P}_x}_{\mathrm{TV}}\geq \frac{\alpha }{2n}
\]
with probability at least $1-\beta$. This concludes the proof of Theorem \ref{tv-lower}.







\end{document}